\tikzset{
	>=stealth',
    every node/.style={align=center},
	punkt/.style={
		   rectangle,
		   rounded corners,
		   draw=black, very thick,
		   text width=6.5em,
		   minimum height=2em,
		   text centered},
	pil/.style={
		   ->,
		   thick,
		   shorten <=2pt,
		   shorten >=2pt,},
	commutative diagrams/.cd,
}
\newtheorem{prop}{Proposition}
\DeclareMathOperator{\spn}{span}
\DeclareMathOperator{\tr}{tr}
\DeclareMathOperator*{\argmin}{arg\,min}
\newcommand{\gr}{\text{Gr}}
\newcommand{\st}{\text{St}}
\newcommand{\gl}{\text{GL}}
\newcommand{\bd}[1]{\boldsymbol{#1}}
\newcommand{\R}{\mathbb{R}}
\newcommand{\Cpx}{\mathbb{C}}
\newcommand{\mc}[1]{\mathcal{#1}}
\title{Nested Grassmannians for Dimensionality Reduction with Applications}
\author{\name Chun-Hao Yang \email chunhaoy@ntu.edu.tw \\  
	\addr Institute of Applied Mathematical Science, National Taiwan University, Taipei, Taiwan
	\AND
	\name Baba C.\ Vemuri \email vemuri@ufl.edu \\
	\addr Department of CISE, University of Florida, Gainesville, FL, USA
}
\begin{document}

\maketitle

\begin{abstract}

In the recent past, nested structures in Riemannian manifolds has been studied in the context of dimensionality reduction as an alternative to the popular principal geodesic analysis (PGA) technique, for example, the principal nested spheres. In this paper, we propose a novel framework for constructing a nested sequence of homogeneous Riemannian manifolds. Common examples of homogeneous Riemannian manifolds include the $n$-sphere, the Stiefel manifold, the Grassmann manifold and many others. In particular, we focus on applying the proposed framework to the Grassmann manifold, giving rise to the nested Grassmannians (NG). An important application in which Grassmann manifolds are encountered is planar shape analysis. Specifically, each planar (2D) shape can be represented as a point in the complex projective space which is a complex Grassmann manifold. Some salient features of our framework are: (i) it explicitly exploits the geometry of the homogeneous Riemannian manifolds and (ii) the nested lower-dimensional submanifolds need not be geodesic. With the proposed NG structure, we develop algorithms for the supervised and unsupervised dimensionality reduction problems respectively. The proposed algorithms are compared with PGA via simulation studies and real data experiments and are shown to achieve a higher ratio of expressed variance compared to PGA.


\end{abstract}

\begin{keywords}
Grassmann Manifolds,  Dimensionality Reduction, Shape Analysis, Homogeneous Riemannian Manifolds
\end{keywords}

\section{Introduction}\label{sec:intro}

Riemannian manifolds are often used to model the sample space in which features derived from the raw data encountered in many medical imaging applications live. Common examples include the diffusion tensors (DTs) in diffusion tensor imaging (DTI) \citep{basser1994mr}, the ensemble average propagator (EAP) \citep{callaghan1993principles}. Both DTs and EAP are used to capture the diffusional properties of water molecules in the central nervous system by  non-invasively imaging the tissue via diffusion weighted magnetic resonance imaging. In DTI, diffusion at a voxel is captured  by a DT, which is a $3 \times 3$ symmetric positive-definite matrix, whereas EAP is a probability distribution characterizing the local diffusion at a voxel, which can be parametrized as a point on the Hilbert sphere. Another example is the shape space used to represent shapes in shape analysis. There are many ways to represent a shape, and the most simple one is to use landmarks. For the landmark-based representation, the shape space is called Kendall's shape space \citep{kendall1984shape}. Kendall's shape space is in general a stratified space \citep{goresky1988stratified, feragen2014geometry}, but for the special case of planar shapes, the shape space is the complex projective space, which is a complex Grassmann manifold. The examples mentioned above are often high-dimensional: a DTI scan usually contains half a million DTs; the shape of the Corpus Callosum (which is used in our experiments) is represented by a several hundreds of boundary points in $\R^2$. Thus, in these cases, dimension reduction techniques, if applied appropriately, can benefit the subsequent statistical analysis.

For data on Riemannian manifolds, the most widely used dimensionality reduction method is the principal geodesic analysis (PGA) \citep{fletcher2004principal}, which generalizes the principal component analysis (PCA) to manifold-valued data. In fact, there are many variants of PGA. \citet{fletcher2004principal} proposed to find the geodesic submanifold of a certain dimension that maximizes the projected variance and computationally, it was achieved via a linear approximation, i.e., applying PCA on the tangent space at the intrinsic mean. This is sometimes referred to as the tangent PCA. Note that this approximation requires the data to be clustered around the intrinsic mean, otherwise the tangent space approximation to the manifold leads to inaccuracies. Later on, \citet{sommer2010manifold} proposed the Exact PGA (EPGA), which does not use any linear approximation. However, EPGA is computationally expensive as it requires two non-linear optimizations steps per iteration (projection to the geodesic submanifold and finding the new geodesic direction such that the loss of information is minimized). \citet{chakraborty2016efficient} partially solved this problem for manifolds with constant sectional curvature (spheres and hyperbolic spaces) by deriving closed form formulae for the projection. Other variants of PGA include but are not limited to sparse exact PGA~\citep{banerjee2017sparse}, geodesic PCA~\citep{huckemann2010intrinsic}, and probabilistic PGA~\citep{zhang2013probabilistic}. All these methods focus on projecting data to a \emph{geodesic submanifold} as in PCA where one projects data to a vector subspace. Instead, one can also project data to a submanifold that minimizes the reconstruction error without any further restrictions, e.g.\ being geodesic. This is the generalization of the principal curve~\citep{hastie1989principal} to Riemannian manifolds presented in~\citet{hauberg2016principal}.

Another feature of PCA is that it produces a sequence of nested vector subspaces. From this observation, \citet{jung2012analysis} proposed the \emph{principal nested spheres} (PNS) by embedding an $n-1$-sphere into an $n$-sphere, and the embedding is not necessarily isometric. Hence PNS is more general than PGA in that PNS is not required to be geodesic. Similarly, for the manifold $P_n$ of $n \times n$ SPD matrices, \cite{harandi2018dimensionality} proposed a geometry-aware dimension reduction by projecting data in $P_n$ to $P_m$ for some $m \ll n$. They also applied the nested $P_n$ for the supervised dimensionality reduction problem. \cite{damon2014backwards} considered a nested sequence of relations which determine a nested sequence of submanifolds that are not necessarily geodesic. They showed various examples, including Euclidean space and the $n$-sphere, depicting how the nested relations generalized PCA and PNS. However, for an arbitrary Riemannian manifold, it is not clear how to construct a nested submanifold. Another generalization of PGA was proposed by \cite{pennec2018barycentric}, called the exponential barycentric subspace (EBS). A $k$-dimensional EBS is defined as the locus of weighted exponential barycenters of $(k + 1)$ affinely independent reference points. The EBSs are naturally nested by removing or adding reference points.

Unlike PGA which can be applied to any Riemannian manifolds, the construction of the nested manifolds relies heavily on the geometry of the specific manifold, and there is no general principle for such a construction. All the examples described above (spheres and $P_n$) and several others such as the Grassmannian, Stiefel etc. are homogeneous Riemannian manifolds \citep{helgason1979differential}, which intuitively means that all points on the manifold 'look' the same. In this work, we propose a general framework for constructing a nested sequence of homogeneous Riemannian manifolds, and, via some simple algebraic computation, show that the nested sphere and the nested $P_n$ can indeed be derived within this framework. We will then apply this framework to the Grassmann manifolds, called the nested Grassmann manifolds (NG). The Grassmann manifold $\gr(p, \mathbb{V})$ is the manifold of all $p$-dimensional subspaces of the vector space $\mathbb{V}$ where $1 \leq p \leq \dim \mathbb{V}$. Usually $\mathbb{V} = \R^n$ or $\mathbb{V} = \Cpx^n$. An important example is Kendall's shape space of 2D shapes. The space of all shapes determined by $k$ landmarks in $\R^2$ is denoted by $\Sigma^k_2$, and \citet{kendall1984shape} showed that it is isomorphic to the complex projective space $\Cpx P^{k-2} \cong \gr(1, \Cpx^{k-1})$. In many applications, the number $k$ of landmarks is large, and so is the dimension of $\gr(1, \Cpx^{k-1})$. The core of the proposed dimensionality reduction involves projecting data on $\gr(p, \mathbb{V})$ to $\gr(p, \widetilde{\mathbb{V}})$ with $\dim \widetilde{\mathbb{V}} \ll \dim \mathbb{V}$. The main contributions of this work are as follows: (i) We propose a general framework for constructing a nested sequence of homogeneous Riemannian manifolds unifying the recently proposed nested spheres \citep{jung2012analysis} and nested SPD manifolds \citep{harandi2018dimensionality}. (ii) We present novel dimensionality reduction techniques based on the concept of NG in both supervised and unsupervised settings respectively. (iii) We demonstrate via several simulation studies and real data experiments, that the proposed NG can achieve a higher ratio of expressed variance compared to PGA.

The rest of the paper is organized as follows. In Section~\ref{sec:nested_homo}, we briefly review the definition of homogeneous Riemannian manifolds and present the recipe for the construction of nested homogeneous Riemannian manifolds. In Section~\ref{sec:nested_grassmannian}, we first review the geometry of the Grassmannian. By applying the procedure developed in Section~\ref{sec:nested_homo}, we present the nested Grassmann manifolds representation and discuss some of its properties in details. Then we describe algorithms for our unsupervised and supervised dimensionality reduction techniques, called the Principal Nested Grassmanns (PNG), in Section~\ref{sec:dr}. In Section~\ref{sec:exp}, we present several simulation studies and experimental results showing how the PNG technique performs compared to PGA under different settings. Finally, we draw conclusions in Section~\ref{sec:conc}.

\section{Nested Homogeneous Spaces}\label{sec:nested_homo}
In this section, we introduce the structure of nested homogeneous Riemannian manifolds. A Riemannian manifold $(M, \tau)$ is \emph{homogeneous} if the group of isometries $G = I(M)$ admitted by the manifold acts transitively on $M$ \citep{helgason1979differential}, i.e., for $x, y \in M$, there exists $g \in G$ such that $g(x) = y$. In this case, $M$ can be identified with $G/H$ where $H$ is an isotropy subgroup of $G$ at some point $p \in M$, i.e.\ $H = \{g \in G: g(p) = p\}$. Examples of homogeneous Riemannian manifolds include but are not limited to, the $n$-spheres $S^{n-1} = \text{SO}(n)/\text{SO}(n-1)$, the SPD manifolds $P_n = \text{GL}(n)/\text{O}(n)$, the Stiefel manifolds $\text{St}(m,n) = \text{SO}(n)/\text{SO}(n-m)$, and the Grassmann manifolds $\gr(p,n) = \text{SO}(n)/S(\text{O}(p) \times \text{O}(n-p))$.

In this paper, we focus on the case where $G$ is either a real or a complex matrix Lie group, i.e.\ $G$ is a subgroup of $\text{GL}(n, \R)$ or $\text{GL}(n, \Cpx)$. The main idea behind the construction of nested homogeneous spaces is simple: \emph{augmenting the matrix in $G$ in an appropriate way}. With an embedding of the isometry group $G$, the embedding of the homogeneous space $G/H$ follows naturally from the quotient structure.

Let $G$ and $\tilde{G}$ be two connected Lie groups such that $\dim G < \dim \tilde{G}$ and $\tilde{\iota}: G \to \tilde{G}$ be an embedding. For a closed connected subgroup $H$ of $G$, let $\tilde{H} = \tilde{\iota}(H)$. Since $\tilde{\iota}$ is an embedding, $\tilde{H}$ is also a closed subgroup of $\tilde{G}$. Now the canonical embedding of $G/H$ in $\tilde{G}/\tilde{H}$ is defined by $\iota(gH) = \tilde{\iota}(g)\tilde{H}$ for $g \in G$. It is easy to see that $\iota$ is well-defined. Let $g_1, g_2 \in G$ be such that $g_1 = g_2h$ for some $h \in H$. Then
\[
    \iota(g_1H) = \tilde{\iota}(g_1)\tilde{H} = \tilde{\iota}(g_2h)\tilde{H} = \tilde{\iota}(g_2)\tilde{\iota}(h)\tilde{H} = \tilde{\iota}(g_2)\tilde{H} = \iota(g_2H).
\]

Now for the homogeneous Riemannian manifolds $(M = G/H, \tau_1)$ and $(\tilde{M} = \tilde{G}/\tilde{H}, \tau_2)$, denote the left-$G$-invariant, right-$H$-invariant metric on $G$ (resp.\ left-$\tilde{G}$-invariant, right-$\tilde{H}$-invariant metric on $\tilde{G}$) by $\bar{\tau}_1$ and $\bar{\tau}_2$, respectively (see \citet[Prop. 3.16(4)]{cheeger1975comparison}).
\begin{prop}\label{prop:isometry_homo}
If $\tilde{\iota}: G \to \tilde{G}$ is isometric, then so is $\iota: G/H \to \tilde{G}/\tilde{H}$.
\end{prop}
\begin{proof}
Denote the Riemannian submersions by $\pi:G \to G/H$ and $\tilde{\pi}:\tilde{G} \to \tilde{G}/\tilde{H}$. Let $X$ and $Y$ be vector fields on $G/H$ and $\bar{X}$ and $\bar{Y}$ be their horizontal lifts respectively, i.e.\ $d\pi(\bar{X}) = X$ and $d\pi(\bar{Y}) = Y$. By the definition of Riemannian submersions, $d\pi$ is isometric on the horizontal spaces, i.e.\ $\bar{\tau}_1(\bar{X}, \bar{Y}) = \tau_1(d\pi(\bar{X}), d\pi(\bar{Y})) = \tau_1(X, Y)$. Since $\tilde{\iota}$ is isometric, we have $\bar{\tau}_1(\bar{X}, \bar{Y}) = \bar{\tau}_2(d\tilde{\iota}(\bar{X}), d\tilde{\iota}(\bar{Y}))$. By the definition of $\iota$, we also have $\iota \circ \pi = \tilde{\pi} \circ \tilde{\iota}$, which implies $d\iota \circ d\pi = d\tilde{\pi} \circ d\tilde{\iota}$. Hence,
\begin{align*}
\tau_1(X, Y) & = \bar{\tau}_1(\bar{X}, \bar{Y}) = \bar{\tau}_2(d\tilde{\iota}(\bar{X}), d\tilde{\iota}(\bar{Y}))\\
& = \tau_2((d\tilde{\pi}\circ d\tilde{\iota})(\bar{X}), (d\tilde{\pi}\circ d\tilde{\iota})(\bar{Y}))\\
& = \tau_2((d\iota\circ d\pi)(\bar{X}), (d\iota\circ d\pi)(\bar{Y}))\\
& = \tau_2(d\iota(X), d\iota(Y))
\end{align*}
where the third equality follows from the isometry of $d\tilde{\pi}$.
\end{proof}

Proposition~\ref{prop:isometry_homo} simply says that if the isometry group is isometrically embedded, then the associated homogeneous Riemannian manifolds will also be isometrically embedded. Conversely, if we have a Riemannian submersion $\tilde{f}: \tilde{G} \to G$, it can easily be shown that the induced map $f: \tilde{G}/\tilde{H} \to G/H$ would also be a Riemannian submersion where $H = \tilde{f}(\tilde{H})$. The construction above can be applied to a sequence of homogeneous spaces $\{M_m\}_{m=1}^\infty$, i.e.\ the embedding $\iota_m:M_m \to M_{m+1}$ can be induced from the embedding of the isometry groups $\tilde{\iota}_m: G_m \to G_{m+1}$ where $G_m = I(M_m)$ provided $\dim G_i < \dim G_j$ for $i < j$. See Figure~\ref{fig:homo_diagram} for the structure of nested homogeneous spaces.  

\begin{figure}[H]
    \centering
    \begin{tikzcd}[row sep=large, font=\normalsize]
    $G_m$ \arrow[hookrightarrow]{r}{\Large $\tilde{\iota}$}
        \arrow[rightarrow]{d}{\Large $\pi$}
      & $G_{m+1}$ \arrow[rightarrow]{d}{\Large $\pi$}\\
    $G_m/H_m$ \arrow[leftrightarrow]{d}{\Large $f$} 
      & $G_{m+1}/H_{m+1}$\arrow[leftrightarrow]{d}{\Large $f$}\\
    $M_m$ \arrow[hookrightarrow]{r}{\Large $\iota$}
      & $M_{m+1}$
    \end{tikzcd}
    \caption{Commutative diagram of the induced embedding for homogeneous spaces.}
    \label{fig:homo_diagram}
\end{figure}

\section{Nested Grassmann Manifolds}\label{sec:nested_grassmannian}

In this section, we will apply the theory of nested homogeneous space from the previous section to the Grassmann manifolds. First, we briefly review the geometry of the Grassmann manifolds in Section~\ref{sub:geometry}. With the theory in Section~\ref{sec:nested_homo}, we derive the nested Grassmann manifolds in Section~\ref{sub:embedding}, and the derivation for nested spheres and nested SPD manifolds are carried out in Section~\ref{sub:nested_examples}. 

\subsection{The Riemannian Geometry of Grassmann Manifolds}%
\label{sub:geometry}

To simplify the notation, we assume $\mathbb{V} = \R^n$ and write $\gr(p, n) \coloneqq \gr(p, \R^n)$. All the results presented in this section can be easily extended to the case of $\mathbb{V} = \Cpx^n$ by replacing transposition with conjugate transposition and orthogonal groups with unitary groups. The Grassmann manifold $\gr(p,n)$ is the manifold of all $p$-dimensional subspaces of $\R^n$, and for a subspace $\mc{X} \in \gr(p,n)$, we write $\mc{X} = \spn(X)$ where the columns of $X$ form an orthonormal basis for $\mc{X}$. The space of all $n \times p$ matrices $X$ such that $X^TX = I_p$ is called the \emph{Stiefel manifold}, denoted by $\st(p,n)$. Special cases of Stiefel manifolds are the Lie group of all orthogonal matrices, $\text{O}(n) = \st(n,n)$, and the $n$-sphere, $S^{n-1} = \st(1,n)$. The Stiefel manifold with the induced Euclidean metric (i.e.\ for $U, V \in T_X\st(p,n)$, $\langle U, V \rangle_X = \tr(U^TV)$) is a homogeneous Riemannian manifold, $\st(p,n) = \text{SO}(n)/\text{SO}(n-p)$. A canonical Riemannian metric on the Grassmann manifold can be inherited from the metric on $\st(p, n)$ since it is invariant to the left multiplication by elements of $O(n)$ \citep{absil2004riemannian,edelman1998geometry}. The Grassmann manifold with this metric is also homogeneous, $\gr(p, n) = \text{SO}(n)/S(\text{O}(p) \times \text{O}(n-p))$. 

With this canonical metric on the Grassmann manifolds, the geodesic can be expressed in closed form. Let $\mc{X}=\spn(X) \in \gr(p,n)$ where $X \in \st(p, n)$ and $H$ be an $n \times p$ matrix. Then the geodesic $\gamma(t)$ with $\gamma(0) = \mc{X}$ and $\gamma^{\prime}(0) = H$ is given by $\gamma_{\mc{X}, H}(t) = \spn(XV \cos \Sigma t + U \sin \Sigma t)$ where $H = U\Sigma V^T$ is the compact singular value decomposition \citep[Theorem 2.3]{edelman1998geometry}. The \emph{exponential map} at $\mc{X}$ is a map from $T_{\mc{X}}\gr(p, n)$ to $\gr(p, n)$ defined by $\text{Exp}_{\mc{X}}H = \gamma_{\mc{X}, H}(1) = \spn(XV\cos \Sigma + U\sin \Sigma)$. If $X^TY$ is invertible, the geodesic distance between $\mc{X} = \spn(X)$ and $\mc{Y} = \spn(Y)$ is given by $d_g^2(\mc{X}, \mc{Y}) = \tr \Theta^2 = \sum_{i=1}^p\theta_i^2$ where $(I - XX^T)Y(X^TY)^{-1} = U\Sigma V^T$, $U\in\st(p, n)$, $V \in \text{O}(p)$, and $\Theta = \tan^{-1}\Sigma$.  The diagonal entries $\theta_1,\ldots,\theta_k$ of $\Theta$ are known as the principal angles between $\mc{X}$ and $\mc{Y}$. 

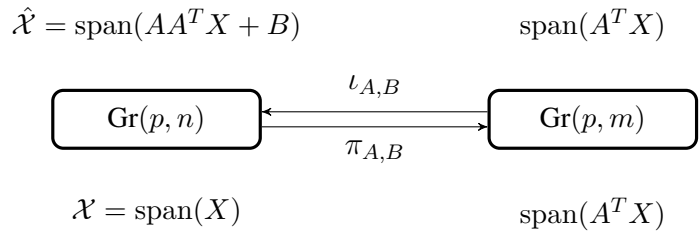
\begin{figure}[H]
    \centering
    \begin{tikzpicture}[node distance=3cm, auto,]
        \node[punkt] (space1) {$\gr(p, n)$};
        \node[punkt, right=of space1] (space2) {$\gr(p, m)$};
        \path[->] ($(space1.east)+(0,-0.1)$) edge node[below=0.5mm] {\large $\pi_{A,B}$} ($(space2.west)+(0,-0.1)$);
        \path[<-] ($(space1.east)+(0,+0.1)$) edge node[above=0.5mm] {\large $\iota_{A,B}$} ($(space2.west)+(0,+0.1)$);
        \node[below=0.5cm of space1] (X) {$\mc{X}=\spn(X)$};
        \node[above=0.5cm of space1, xshift=0cm] (Xrecon) {$\hat{\mc{X}} = \spn(AA^TX+B)$};
        \node[below=0.5cm of space2] (Xproj) {$\spn(A^TX)$};
        \node[above=0.5cm of space2] (Xproj) {$\spn(A^TX)$};
    \end{tikzpicture}
    \caption{Illustration of the embedding of $\gr(p, m)$ in $\gr(p, n)$ parametrized by $A \in \st(m, n)$ and $B \in \R^{n \times p}$ such that $A^TB = 0$.}
    \label{fig:diagram}
\end{figure}

\subsection{Embedding of \texorpdfstring{$\gr(p, m)$}{Lg} in \texorpdfstring{$\gr(p, n)$}{Lg}}%
\label{sub:embedding}

Let $\mc{X} = \spn(X) \in \gr(p, m)$, $X \in \st(p,m)$. The map $\iota:\gr(p, m) \to \gr(p, n)$, for $m < n$, defined by 
\[
 \iota(\mc{X})=\spn\left(\left[\begin{array}{c}
                X\\0_{(n-m)\times p}
            \end{array}\right]\right) 
\]
is an embedding and it is easy to check that this embedding is isometric \citep[Eq.\ (8)]{ye2016schubert}. However, for the dimensionality reduction problem, the above embedding is insufficient as it is not flexible enough to encompass other possible embeddings. To design flexible embeddings, we apply the framework proposed in Section~\ref{sec:nested_homo}. We consider $M_m = \gr(p,m)$ for which the isometry groups are $G_m = \text{SO}(m)$ and $H_m = \text{S}(\text{O}(p) \times \text{O}(m-p))$.

In this paper, we consider the embedding $\tilde{\iota}_m: \text{SO}(m) \to \text{SO}(m+1)$ given by,
\begin{align} \label{eq:embedding_ortho}
        \tilde{\iota}_m(O) = \text{GS}\left(R \left[\begin{array}{cc}
                O & a\\
                b^T & c
            \end{array}\right]\right)
\end{align}
where $O \in \text{SO}(m)$, $R \in \text{SO}(m+1)$, $a, b \in \R^m$, $c \in \R$, $c \neq b^TO^{-1}a$, and $\text{GS}(\cdot)$ is the Gram-Schmidt process. Since the Riemannian submersion $\pi:\text{SO}(m) \to \gr(p,m)$ is defined by $\pi(O) = \spn(O_p)$ where $O \in \text{SO}(m)$ and $O_p$ is the $m\times p$ matrix containing the first $p$ columns of $O$, the induced embedding $\iota_m:\gr(p, m) \to \gr(p, m+1)$ is given by,
\[
\iota_m(\mc{X}) = \spn\left(R\left[\begin{array}{c}
                X\\
                b^T
            \end{array}\right]\right) = \spn(\tilde{R}X + vb^T) ,
\]
where $b \in \R^p$, $R \in \text{SO}(m+1)$, $\tilde{R}$ contains the first $m$ columns of $R$ (which means $\tilde{R} \in \st(m, m+1)$), $v$ is the last column of $R$, and $\mc{X} = \spn(X) \in \gr(p,m)$. It is easy to see that for $R = I$ and $b=0$, this gives the natural embedding described in \citet{ye2016schubert} and at the beginning of this section.
\begin{prop}\label{prop:isometry}
If $b = 0$, then $\iota_m$ is an isometric embedding.
\end{prop}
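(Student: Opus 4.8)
The plan is to show that, once $\bd{b}=0$, the map $\iota_m$ is a composition of two maps that are already known to be isometries, so that it inherits the isometric property with essentially no computation. Setting $\bd{b}=0$ in the displayed formula for $\iota_m$ gives $\iota_m(\mc{X}) = \spn(\tilde{\bd{R}}\bd{X})$, where $\tilde{\bd{R}}\in\st(m,m+1)$ is the matrix of the first $m$ columns of $\bd{R}\in\text{O}(m+1)$. First I would record that this is well defined on $\gr(p,m)$: since $\tilde{\bd{R}}^T\tilde{\bd{R}}=\bd{I}_m$, every o.n.b.\ $\bd{X}\in\st(p,m)$ maps to $\tilde{\bd{R}}\bd{X}$ with $(\tilde{\bd{R}}\bd{X})^T(\tilde{\bd{R}}\bd{X}) = \bd{X}^T\bd{X} = \bd{I}_p$, so $\tilde{\bd{R}}\bd{X}\in\st(p,m+1)$, and replacing $\bd{X}$ by $\bd{X}\bd{Q}$ with $\bd{Q}\in\text{O}(p)$ does not change the span.

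The key observation is the factorization $\iota_m = L_{\bd{R}}\circ j$, where $j:\gr(p,m)\to\gr(p,m+1)$ is the natural zero-padding embedding $j(\spn(\bd{X})) = \spn\left(\left[\begin{smallmatrix}\bd{X}\\ \bd{0}\end{smallmatrix}\right]\right)$ from the beginning of this section, and $L_{\bd{R}}:\gr(p,m+1)\to\gr(p,m+1)$, $L_{\bd{R}}(\spn(\bd{Y}))=\spn(\bd{R}\bd{Y})$, is the left action of $\bd{R}\in\text{O}(m+1)$. Writing $\bd{R}=[\tilde{\bd{R}},\bd{v}]$ with $\bd{v}$ the last column, one checks $\bd{R}\left[\begin{smallmatrix}\bd{X}\\ \bd{0}\end{smallmatrix}\right] = \tilde{\bd{R}}\bd{X}$, which is exactly $\iota_m$ with $\bd{b}=0$.

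To finish I would invoke the two facts already in hand. The natural embedding $j$ is isometric, as recorded earlier via \cite[Eq.\ (8)]{ye2016schubert}, so $j^* g_{m+1} = g_m$, where $g_m, g_{m+1}$ denote the canonical metrics. And $L_{\bd{R}}$ is an isometry of $\gr(p,m+1)$ because the canonical metric is invariant under left multiplication by elements of $\text{O}(m+1)$, as noted in Section~\ref{sub:geometry}; hence $L_{\bd{R}}^* g_{m+1} = g_{m+1}$. Since a composition of an isometric embedding with an isometry is again an isometric embedding,
\[
\iota_m^* g_{m+1} = (L_{\bd{R}}\circ j)^* g_{m+1} = j^*\big(L_{\bd{R}}^* g_{m+1}\big) = j^* g_{m+1} = g_m ,
\]
which is the desired conclusion; that $\iota_m$ is an embedding at all was already established by the homogeneous-space construction preceding the statement.

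I do not expect a genuine obstacle here: the entire content is the recognition of the factorization, after which both pieces are cited isometries. The only quantitative fact doing any work is $\tilde{\bd{R}}^T\tilde{\bd{R}}=\bd{I}_m$. If one instead wanted a self-contained computation avoiding the invariance citation, the main (and still routine) step would be the differential check: a tangent vector at $\mc{X}$ represented by $\bd{H}\in\Real^{m\times p}$ with $\bd{X}^T\bd{H}=0$ pushes forward to $\tilde{\bd{R}}\bd{H}$, which still satisfies horizontality $(\tilde{\bd{R}}\bd{X})^T(\tilde{\bd{R}}\bd{H}) = \bd{X}^T\bd{H} = 0$, and the trace inner product is preserved since $\tr\big((\tilde{\bd{R}}\bd{H})^T(\tilde{\bd{R}}\bd{H})\big) = \tr(\bd{H}^T\tilde{\bd{R}}^T\tilde{\bd{R}}\bd{H}) = \tr(\bd{H}^T\bd{H})$.
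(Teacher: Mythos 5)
Your proof is correct, but it takes a different route from the paper's. The paper argues at the level of the distance function: it sets $\tilde{\bd{X}}=\tilde{\bd{R}}\bd{X}$, $\tilde{\bd{Y}}=\tilde{\bd{R}}\bd{Y}$, computes $(\bd{I}-\tilde{\bd{X}}\tilde{\bd{X}}^T)\tilde{\bd{Y}}(\tilde{\bd{X}}^T\tilde{\bd{Y}})^{-1}=\tilde{\bd{R}}(\bd{I}-\bd{X}\bd{X}^T)\bd{Y}(\bd{X}^T\bd{Y})^{-1}$ to conclude that principal angles (hence geodesic distances) are preserved, and then invokes the Myers--Steenrod theorem to upgrade distance preservation to a Riemannian isometric embedding. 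You instead factor $\iota_m=L_{\bd{R}}\circ j$ into the zero-padding embedding $j$ (already cited as isometric) and the left $\text{O}(m+1)$-action $L_{\bd{R}}$ (an isometry by the stated invariance of the canonical metric), and conclude by functoriality of the pullback. Your factorization is valid --- $\bd{R}\left[\begin{smallmatrix}\bd{X}\\ \bd{0}\end{smallmatrix}\right]=\tilde{\bd{R}}\bd{X}$ is exactly the $\bd{b}=0$ case --- and your route has some advantages: it works directly with the metric tensor (which is what ``isometric embedding'' means), it does not need the invertibility of $\bd{X}^T\bd{Y}$ implicit in the principal-angle formula, and it sidesteps Myers--Steenrod entirely; your closing differential check ($\tilde{\bd{R}}^T\tilde{\bd{R}}=\bd{I}_m$ preserves horizontality and the trace inner product) makes the argument self-contained if one does not want to lean on the two citations. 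What the paper's computation buys in exchange is an explicit, quantitative statement that all principal angles are unchanged, which is slightly more information than metric preservation alone. Both arguments are sound; yours is arguably the cleaner of the two.
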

\begin{proof}
With Proposition~\ref{prop:isometry_homo}, it suffices to show that $\tilde{\iota}_m$ is isometric when $b = 0$. Note that as $\iota_m$ is independent of $a$ and $c$ in the definition of $\tilde{\iota}_m$, we can assume $a = 0$ and $c=1$ without loss of generality. If $b = 0$, $\tilde{\iota}_m$ simplifies to 
\[
        \tilde{\iota}_m(O) = R \left[\begin{array}{cc}
                O & 0\\
                0 & 1
            \end{array}\right]
\]
where $R \in \text{SO}(m+1)$. The Riemannian distance on $\text{SO}(n)$ given the induced Euclidean metric is $d_{\text{SO}}(O_1, O_2) = \frac{1}{\sqrt{2}}\|\log O_1^TO_2\|_F$. Then for $O_1, O_2 \in \text{SO}(m)$, 
\[
    d_{\text{SO}}(\tilde{\iota}_m(O_1), \tilde{\iota}_m(O_2)) = \frac{1}{\sqrt{2}}\left\|\log\left(
    \left[\begin{array}{cc}
                O_1^TO_2 & 0\\
                0 & 1
            \end{array}\right]
    \right)\right\|_F = d_{\text{SO}}(O_1, O_2).
\]
Therefore $\tilde{\iota}_m$ is an isometric embedding, and so is $\iota_m$ by Proposition~\ref{prop:isometry_homo}.
\end{proof}

With the embedding $\iota_m$, we can construct the corresponding projection $\pi_m: \gr(p, m+1) \to \gr(p, m)$ using the following proposition.
\begin{prop}\label{prop:proj}
The projection $\pi_m: \gr(p, m+1) \to \gr(p, m)$ corresponding to $\iota_m(\mc{X}) = \spn(\tilde{R}X + vb^T)$ is given by $\pi_m(\mc{X}) = \spn(\tilde{R}^TX)$. 
\end{prop}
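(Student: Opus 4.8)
The plan is to read ``the projection corresponding to $\iota_m$'' as a left inverse (a retraction) of the embedding, i.e.\ a map $\pi_m : \gr(p, m+1) \to \gr(p, m)$ satisfying $\pi_m \circ \iota_m = \mathrm{id}_{\gr(p,m)}$, and then to verify directly that the candidate $\pi_m(\mc{X}) = \spn(\tilde{\bd{R}}^T\bd{X})$ has this property. The whole computation rests on the two orthonormality relations that $\tilde{\bd{R}}$ and $\bd{v}$ inherit from $\bd{R} \in \text{O}(m+1)$: since $\bd{R} = [\tilde{\bd{R}} \mid \bd{v}]$ has orthonormal columns, $\tilde{\bd{R}}^T\tilde{\bd{R}} = \bd{I}_m$ and $\tilde{\bd{R}}^T\bd{v} = \bd{0}$.

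First I would address well-definedness. Writing $\mc{X} = \spn(\bd{X})$ with $\bd{X} \in \st(p, m+1)$, I need the span of $\tilde{\bd{R}}^T\bd{X}$ to be independent of the chosen orthonormal basis: replacing $\bd{X}$ by $\bd{X}\bd{Q}$ with $\bd{Q} \in \text{O}(p)$ right-multiplies $\tilde{\bd{R}}^T\bd{X}$ by $\bd{Q}$ and leaves its column space unchanged, so $\pi_m$ descends to a well-defined map on $\gr(p, m+1)$. I would also flag the implicit rank requirement --- that $\tilde{\bd{R}}^T\bd{X}$ have rank $p$ so that its span is genuinely an element of $\gr(p, m)$ --- which holds on the relevant open dense subset where the projection is defined, and in particular on the image of $\iota_m$, which is all that the retraction property needs.

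Next I would carry out the verification of $\pi_m \circ \iota_m = \mathrm{id}$. Take $\mc{X} = \spn(\bd{X}) \in \gr(p, m)$ with $\bd{X} \in \st(p, m)$, so that $\iota_m(\mc{X}) = \spn(\bd{Z})$ with $\bd{Z} = \tilde{\bd{R}}\bd{X} + \bd{v}\bd{b}^T$. Applying the candidate projection and using the two relations above,
\[
\tilde{\bd{R}}^T\bd{Z} = \tilde{\bd{R}}^T\tilde{\bd{R}}\bd{X} + (\tilde{\bd{R}}^T\bd{v})\bd{b}^T = \bd{X} + \bd{0} = \bd{X},
\]
whence $\pi_m(\iota_m(\mc{X})) = \spn(\bd{X}) = \mc{X}$. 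This identifies $\pi_m$ as the left inverse of $\iota_m$ and establishes the claim.

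The main obstacle here is not the algebra, which is a one-line consequence of orthonormality, but pinning down the precise sense in which $\pi_m$ is \emph{the} projection --- the statement presupposes a canonical choice. I would make explicit that the natural choice is the one reversing the embedding's construction (rotate by $\bd{R}^T = [\tilde{\bd{R}} \mid \bd{v}]^T$ and discard the last coordinate, namely the row $\bd{v}^T\bd{X}$), and that the characterizing property we verify is exactly the retraction identity $\pi_m \circ \iota_m = \mathrm{id}$. A secondary subtlety worth a sentence is that $\pi_m$, unlike $\iota_m$, is only defined where $\tilde{\bd{R}}^T\bd{X}$ retains full rank $p$, so it is a projection in the sense of a retraction onto the embedded submanifold rather than a globally defined smooth map.
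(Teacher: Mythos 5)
Your proposal is correct and takes essentially the same route as the paper: both verify the retraction identity $\pi_m \circ \iota_m = \mathrm{id}$ using $\tilde{\bd{R}}^T\tilde{\bd{R}} = \bd{I}$ and $\tilde{\bd{R}}^T\bd{v} = \bd{0}$ (the paper handles the change of representative via an $\bd{L} \in \gl(p)$ where you instead note invariance under $\bd{Q} \in \text{O}(p)$ separately). Your added remarks on well-definedness and the rank-$p$ caveat are points the paper leaves implicit, but they do not change the argument.
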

\begin{proof}
First, let $\mc{Y} = \spn(Y) \in \gr(p, m)$ and $\mc{X} = \spn(X) \in \gr(p, m+1)$ be such that $\mc{X} = \spn(\tilde{R}Y + vb^T)$. Then $XL = \tilde{R}Y + vb^T$ for some $L \in \gl(p)$. Therefore, $Y = \tilde{R}^T(XL - vb^T) = \tilde{R}^TXL$ and $\mc{Y} = \spn(Y) = \spn(\tilde{R}^TXL) = \spn(\tilde{R}^TX)$. Hence, the projection is given by $\pi_m(\mc{X}) = \spn(\tilde{R}^TX)$. This completes the proof.
\end{proof}

Note that for $\mc{X} = \spn(X) \in \gr(p, m+1)$, $\iota_m(\pi_m(\mc{X})) = \spn(RR^TX + vb^T) = \spn((I-vv^T)X + vb^T)$ where $v \in \R^{m+1}$ and $\|v\|=1$. The nested relation can be extended inductively and we refer to this construction as the \emph{nested Grassmann structure}:
\[
\gr(p, m) \stackrel{\iota_m}{\hookrightarrow} \gr(p, m+1)
\stackrel{\iota_{m+1}}{\hookrightarrow} \ldots \stackrel{\iota_{n-2}}{\hookrightarrow}
\gr(p, n-1) \stackrel{\iota_{n-1}}{\hookrightarrow} \gr(p, n).
\]
Thus the embedding from $\gr(p, m)$ into $\gr(p, n)$ can be constructed inductively by $\iota \coloneqq \iota_{n-1} \circ \ldots \circ \iota_{m-1} \circ \iota_m$ and similarly for the corresponding projection. The explicit forms of the embedding and the projection are given in the following proposition.
\begin{prop}
The embedding of $\gr(p, m)$ into $\gr(p, n)$ for $m < n$ is given by $\iota_{A,B}(\mc{X}) = \spn(AX + B)$ where $A \in \st(m, n)$ and $B \in \R^{n \times p}$ such that $A^TB = 0$. The corresponding projection from $\gr(p, n)$ to $\gr(p, m)$ is given by $\pi_{A} = \spn(A^TX)$.
\end{prop}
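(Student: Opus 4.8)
The plan is to prove the proposition by induction on the number of one-step embeddings that are composed, carrying along at each stage a pair of matrices $(\bd{A}_k, \bd{B}_k)$ that encodes the composite map. Concretely, I would claim that for every $k$ with $m \le k \le n$ the composite $\iota_{k-1}\circ\cdots\circ\iota_m : \gr(p,m) \to \gr(p,k)$ sends $\mc{X} = \spn(\bd{X})$ to $\spn(\bd{A}_k\bd{X} + \bd{B}_k)$, where $\bd{A}_k \in \st(m,k)$, $\bd{B}_k \in \Real^{k\times p}$, and $\bd{A}_k^T\bd{B}_k = \bd{0}$. The base case $k=m$ is the identity map, with $\bd{A}_m = \bd{I}_m$ and $\bd{B}_m = \bd{0}$, which trivially meets the two constraints.

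For the inductive step I would apply the single-step embedding $\iota_k(\cdot) = \spn\bigl(\tilde{\bd{R}}_k(\cdot) + \bd{v}_k\bd{b}_k^T\bigr)$ to the carried representative $\bd{A}_k\bd{X}+\bd{B}_k$ and read off
\[
\iota_k\bigl(\spn(\bd{A}_k\bd{X}+\bd{B}_k)\bigr) = \spn\bigl(\tilde{\bd{R}}_k\bd{A}_k\bd{X} + (\tilde{\bd{R}}_k\bd{B}_k + \bd{v}_k\bd{b}_k^T)\bigr),
\]
so that $\bd{A}_{k+1} = \tilde{\bd{R}}_k\bd{A}_k$ and $\bd{B}_{k+1} = \tilde{\bd{R}}_k\bd{B}_k + \bd{v}_k\bd{b}_k^T$. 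The two constraints then propagate using only the orthonormality of the columns of $\bd{R}_k \in \text{O}(k+1)$: since $\tilde{\bd{R}}_k^T\tilde{\bd{R}}_k = \bd{I}_k$ one gets $\bd{A}_{k+1}^T\bd{A}_{k+1} = \bd{A}_k^T\bd{A}_k = \bd{I}_m$, and since moreover $\tilde{\bd{R}}_k^T\bd{v}_k = \bd{0}$ (the last column of $\bd{R}_k$ is orthogonal to the first $k$) one gets $\bd{A}_{k+1}^T\bd{B}_{k+1} = \bd{A}_k^T\bd{B}_k = \bd{0}$ by the inductive hypothesis. Unrolling the recursion to $k=n$ yields $\bd{A} = \tilde{\bd{R}}_{n-1}\cdots\tilde{\bd{R}}_m \in \st(m,n)$ together with the matching $\bd{B}$, which establishes the embedding formula $\iota_{\bd{A},\bd{B}}(\mc{X}) = \spn(\bd{AX}+\bd{B})$.

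For the projection I would argue dually. The composite is $\pi_m\circ\cdots\circ\pi_{n-1}$, and since each $\pi_k(\mc{X}) = \spn(\tilde{\bd{R}}_k^T\bd{X})$ depends only on the subspace (replacing $\bd{X}$ by $\bd{X}\bd{L}$ with $\bd{L}\in\gl(p)$ leaves the span unchanged), the stages chain cleanly to $\spn(\tilde{\bd{R}}_m^T\cdots\tilde{\bd{R}}_{n-1}^T\bd{X}) = \spn(\bd{A}^T\bd{X})$ with the same $\bd{A} = \tilde{\bd{R}}_{n-1}\cdots\tilde{\bd{R}}_m$ as above, giving $\pi_{\bd{A}}(\mc{X}) = \spn(\bd{A}^T\bd{X})$. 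This side requires no constraint bookkeeping.

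The step I expect to require the most care is the consistent chaining of the one-step embedding formula, because the intermediate matrix $\bd{A}_k\bd{X}+\bd{B}_k$ fed into $\iota_k$ is generally \emph{not} an orthonormal frame; indeed one computes $(\bd{A}_k\bd{X}+\bd{B}_k)^T(\bd{A}_k\bd{X}+\bd{B}_k) = \bd{I}_p + \bd{B}_k^T\bd{B}_k$. I would therefore carry a single matrix representative through all stages rather than re-orthonormalizing between steps, so that the recursion $\bd{A}_{k+1} = \tilde{\bd{R}}_k\bd{A}_k$, $\bd{B}_{k+1} = \tilde{\bd{R}}_k\bd{B}_k + \bd{v}_k\bd{b}_k^T$ is precisely the statement that $\iota_k$ acts by $\bd{Z}\mapsto\tilde{\bd{R}}_k\bd{Z}+\bd{v}_k\bd{b}_k^T$ on the carried representative $\bd{Z}$. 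Once the induction is phrased this way, the remaining content is the purely algebraic verification that the two structural constraints $\bd{A}\in\st(m,n)$ and $\bd{A}^T\bd{B}=\bd{0}$ survive each stage, which, as indicated above, reduces entirely to the identities $\tilde{\bd{R}}_k^T\tilde{\bd{R}}_k = \bd{I}_k$ and $\tilde{\bd{R}}_k^T\bd{v}_k = \bd{0}$.
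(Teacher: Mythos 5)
Your proposal is correct and follows essentially the same route as the paper: the paper simply writes out the unrolled form of your recursion directly, obtaining $\bd{A} = \bd{R}_{n-1}\cdots\bd{R}_m$ and $\bd{B} = \sum_{i=m}^{n-1}\bigl(\prod_{j=i+1}^{n-1}\bd{R}_j\bigr)\bd{v}_i\bd{b}_i^T$, and dismisses the constraints $\bd{A}\in\st(m,n)$ and $\bd{A}^T\bd{B}=\bd{0}$ as ``easy to see,'' whereas you verify them explicitly at each inductive step via $\tilde{\bd{R}}_k^T\tilde{\bd{R}}_k=\bd{I}$ and $\tilde{\bd{R}}_k^T\bd{v}_k=\bd{0}$. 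Your added care about carrying a non-orthonormal representative through the stages is a sound observation but does not change the substance of the argument.
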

\begin{proof}
By the definition, $\iota \coloneqq \iota_{n-1} \circ \ldots \circ \iota_{m-1} \circ \iota_m$ and thus
the embedding $\iota: \gr(p, m) \to \gr(p, n)$ can be simplified as
\[
\iota_{A,B}(\mc{X}) = \spn\left(\Bigg(\prod_{i=m}^{n-1}R_i\Bigg)X + \sum_{i=m}^{n-1} \Bigg(\prod_{j=i+1}^{n-1}R_j\Bigg)v_ib_i^T\right) = \spn(AX + B)
\]
where $R_i \in \st(i, i+1)$, $v_i$ is such that $[R_i\; v_i] \in \text{O}(i+1)$, $b_i \in \R^p$, $A = R_{n-1}R_{n-2}\cdots R_m \in \st(m, n)$, and $B = \sum_{i=m}^{n-1} \big(\prod_{j=i+1}^{n-1}R_j\big)v_ib_i^T$ is an $n \times p$ matrix. It is easy to see that $A^TB = 0$. Similar to Proposition~\ref{prop:proj}, the projection $\pi_{A}: \gr(p, n) \to \gr(p, m)$ is then given by $\pi_{A}(\mc{X}) = \spn(A^TX)$. This completes the proof.
\end{proof}

From Proposition~\ref{prop:isometry}, if $B = 0$ then $\iota_{A}$ is an isometric embedding. 
\emph{Hence, our nested Grassmann structure is more flexible than PGA as it allows one to project the data onto a non-geodesic submanifold.} An illustration is shown in Figure~\ref{fig:diagram}. The results discussed in this subsection can be generalized to any homogeneous space in principle. For a given homogeneous space, once the embedding of the groups of isometries (e.g., Eq.~\eqref{eq:embedding_ortho}) is determined, the induced embedding and the corresponding projection can be derived akin to the case of Grassmann manifolds.

\subsection{Connections to Other Nested Structures}\label{sub:nested_examples}

The nested homogeneous spaces proposed in this work (see Figure~\ref{fig:homo_diagram}) actually provides a unified framework within which, the nested spheres \citep{jung2012analysis} and the nested SPD manifolds \citep{harandi2018dimensionality} are special cases.

{\bf The $n$-Sphere Example:} Since the $n$-sphere can be identified with a homogeneous space $S^{n-1} \cong \text{O}(n)/\text{O}(n-1)$, with the embedding~\eqref{eq:embedding_ortho}, the induced embedding of $S^{n-1}$ into $S^n$ is 
\[
\iota(\bd{x}) = \text{GS}\left(R\left[\begin{array}{c}
                x\\
                b
            \end{array}\right]\right) = \frac{1}{\sqrt{1+b^2}}R\left[\begin{array}{c}
                x \\
                b
            \end{array}\right] = R\left[\begin{array}{c}
                \sin(r)x \\
                \cos(r)
            \end{array}\right]
\]
where $x \in S^{n-1}$, $b \in \R$, and $r = \cos^{-1}\big( \frac{b}{\sqrt{1+b^2}}\big)$. This is precisely the nested sphere proposed in \citet[Eq.\ (2)]{jung2012analysis}. 

{\bf The SPD Manifold Example:} For the $m$-dimensional SPD manifold denoted by $P_m$, $G_{m}=\gl(m)$ and $H_{m}=\text{O}(m)$. Consider the embedding $\tilde{\iota}_{m}:\gl(m) \to \gl(m+1)$ given by
\[
  \tilde{A} = \tilde{\iota}_m(A) = R \left[\begin{array}{cc}
          A & 0\\
          0 & 1
      \end{array}\right]R^{T},
\]
where $A \in \gl(m)$, $R \in \text{O}(m+1)$ and the corresponding projection $\tilde{\pi}_{m}:\gl(m+1) \to \gl(m)$ is 
\[
  \tilde{\pi}_{m}(\tilde{A}) = W^{T}\tilde{A}W
\]
where $W$ contains the first $m$ columns of $R = [W\; v]\in \text{O}(m+1)$ (i.e., $W \in \st(m, m+1)$ and $W^{T}v=0$). The submersion $\psi \circ f: \gl(m) \to P_{m}$ is given by $\psi\circ f(A) = A^{T}A$. Hence the induced embedding $\iota_{m}:P_{m}\to P_{m+1}$ and projection $\pi_{m}:P_{m+1}\to P_{m}$ are
\[
\iota_{m}(X)=WXW^{T}+vv^{T} \text{ and } \pi_{m}(X)=W^{T}XW
\]
which is the projection map used in \citet[Eq.\ (13)]{harandi2018dimensionality}. However, \cite{harandi2018dimensionality} did not perform any embedding or construct a nested family of SPD manifolds. Recently, it came to our attention that \cite{jaquier2020high} derived a similar nested family of SPD manifolds based on the projection maps in \cite{harandi2018dimensionality} described above.

\section{Dimensionality Reduction with Nested Grassmanns}\label{sec:dr}

In this section, we discuss how to apply the nested Grassmann structure to the problem of dimension reduction. In Section~\ref{sub:dr} and~\ref{sub:supervised_dr}, we describe the unsupervised and supervised dimension reduction based on the nested Grassmann manifolds. In Section~\ref{sub:distance}, we will discuss the choice of distance metrics required by the dimensionality reduction algorithm and present some technical details regarding the implementation. Analysis of principal nested Grassmann (PNG) will be introduced and discussed in Section~\ref{sub:nested_grassmann} and Section~\ref{sub:score}.

\subsection{Unsupervised Dimensionality Reduction}\label{sub:dr}

We can now apply the nested Grassmann structure to the problem of unsupervised dimensionality reduction. Suppose that we are given the points, $\mc{X}_1,\ldots,\mc{X}_N \in \gr(p, n)$. We would like to have lower dimensional representations in $\gr(p,m)$ for $\mc{X}_1,\ldots,\mc{X}_N$ with $m \ll n$. The desired projection map $\pi_{A}$ that we seek is obtained by the minimizing the reconstruction error, i.e.\ 
\[
L_u(A, B) = \frac{1}{N}\sum_{i=1}^N d^2(\mc{X}_i, \hat{\mc{X}}_i) = \frac{1}{N}\sum_{i=1}^N d^2(\spn(X_i),\spn(AA^TX_i + B))
\] 
where $d(\cdot,\cdot)$ is a distance metric on $\gr(p, n)$. It is clear that $L_u$ has a $\text{O}(m)$-symmetry in the first argument, i.e.\ $L_u(AO, B) = L_u(A, B)$ for $O \in \text{O}(m)$. Hence, the optimization is performed over the space $\st(m, n)/\text{O}(m) \cong \gr(m, n)$ when optimizing with respect to this particular loss function. Now we can apply the Riemannian gradient descent algorithm \citep{edelman1998geometry} to obtain $A$ and $B$ by optimizing $L_u(A, B)$ over $\spn(A) \in \gr(m, n)$ and $B \in \R^{n \times p}$ such that $A^TB = 0$. Note that the restriction $A^TB = 0$ simply means that the columns of $B$ are in the null space of $A^T$, denoted $N(A^T)$. Hence in practice this restriction can be handled as follows. For arbitrary $\tilde{B} \in \R^{n \times p}$, project $\tilde{B}$ on to $N(A^T)$, i.e.\ $B = P_{N(A^T)} \tilde{B}$ where $P_{N(A^T)} = I - AA^T$ is the projection from $\R^n$ to $N(A^T)$. Thus, the loss function can be written as   
\[
    L_u(A, B) = \frac{1}{N}\sum_{i=1}^N d^2(\spn(X_i),\spn(AA^TX_i + (I - AA^T)B))
\]
and it is optimized over $\gr(m, n) \times \R^{n \times p}$. Note that since we represent a subspace by its orthonormal basis, when $m > n/2$, we can use the isomorphism $\gr(m, n) \cong \gr(n-m, n)$ to further reduce the computational burden. This will be particularly useful when $m = n-1$ as in Section~\ref{sub:nested_grassmann}. Under this isomorphism $\gr(m, n) \cong \gr(n-m, n)$, the corresponding subspace of $\spn(A) \in \gr(m, n)$ is $\spn(A_{\perp}) \in \gr(n-m, n)$ where $A_{\perp}$ is an $n\times (n-m)$ matrix such that $[A \; A_{\perp}]$ is an orthogonal matrix. Hence the loss function $L_u$ can alternatively be expressed as 
\[
    L_u(A, B) = \frac{1}{N}\sum_{i=1}^N d^2(\spn(X_i),\spn((I-A_{\perp}A_{\perp}^T)X_i + A_{\perp}A_{\perp}^TB)).
\]

\begin{remark} 
The reduction of the space of all possible projections from $\st(m,n)$ to $\gr(m,n)$ is a consequence of the choice of the loss function $L_u$. With a different loss function, one might have a different space of possible projections.
\end{remark}

\subsection{Supervised Dimensionality Reduction}\label{sub:supervised_dr}

If in addition to $\mc{X}_1,\ldots,\mc{X}_N \in \gr(p, n)$, we are given the associated labels $y_1,\ldots,y_N \in \{1,\ldots,k\}$, then we would like to use this extra information to sharpen the result of dimensionality reduction. Specifically, we expect that after reducing the dimension, points from the same class preserve their proximity    while points from different classes are well separated. We use an \emph{affinity function} $a: \gr(p,n) \times \gr(p, n) \to \R$ to encode the structure of the data as suggested by \citet[Sec 3.1, Eq.\ (14)-(16)]{harandi2018dimensionality}. For completeness, we repeat the definition of the affinity function here. The affinity function is defined as $a(\mc{X}_i, \mc{X}_j) = g_w(\mc{X}_i, \mc{X}_j) - g_b(\mc{X}_i, \mc{X}_j)$ where
\begin{align*}
    g_{w}(\mc{X}_i,\mc{X}_j)& =\begin{cases}
1 & \text{if }\mc{X}_i\in N_{w}(\mc{X}_j)\text{ or }\mc{X}_j\in N_{w}(\mc{X}_i)\\
0 & \text{\text{Otherwise}}
\end{cases}\\
    g_{b}(\mc{X}_i,\mc{X}_j)& =\begin{cases}
1 & \text{if }\mc{X}_i\in N_{b}(\mc{X}_j)\text{ or }\mc{X}_j\in N_{b}(\mc{X}_i)\\
0 & \text{\text{Otherwise}}
\end{cases}.
\end{align*}
The set $N_w(\mc{X}_i)$ consists of $\nu_w$ nearest neighbors of $\mc{X}_i$ that have the \emph{same} labels as $y_i$, and the set $N_b(\mc{X}_i)$ consists of $\nu_b$ nearest neighbors of $\mc{X}_i$ that have \emph{different} labels from $y_i$. The nearest neighbors can be computed using the geodesic distance. 

The desired projection map $\pi_{A}$ that we seek is obtained by the minimizing the following loss function 
\[
    L_s(A) 
          = \frac{1}{N^2}\sum_{i, j=1}^N a(\mc{X}_i, \mc{X}_j) d^2(\spn(A^TX_i),\spn(A^TX_j))
\]
where $d$ is a distance metric on $\gr(p, m)$. Note that if the distance metric $d$ has $\text{O}(m)$-symmetry, e.g.\ the geodesic distance, so does $L_s$. In this case the optimization can be done on $\st(m,n)/\text{O}(m) \cong \gr(m,n)$. Otherwise it is on $\st(m, n)$. This supervised dimensionality reduction is termed as, supervised nested Grassmann (sNG).

\subsection{Choice of the distance function}\label{sub:distance}

The loss functions $L_u$ and $L_s$ depend on the choice of the distance $d:\gr(p, n) \times \gr(p, n) \to \R_{\geq 0}$. Besides the geodesic distance, there are many widely used distances on the Grassmann manifold, see, for example, \citet[p.\ 337]{edelman1998geometry} and \citet[Table 2]{ye2016schubert}. In this work, we use two different distance metrics: (1) the geodesic distance $d_g$ and (2) the projection distance, which is also called the chordal distance in \cite{ye2016schubert} and the projection $F$-norm in \cite{edelman1998geometry}. The geodesic distance was defined in Section~\ref{sub:geometry} and the projection distance is defined as follows. For $\mc{X}, \mc{Y} \in \gr(p, n)$, denote the projection matrices onto $\mc{X}$ and $\mc{Y}$ by $P_{\mc{X}}$ and $P_{\mc{Y}}$ respectively. Then, the distance between $\mc{X}$ and $\mc{Y}$ is given by $d_{p}(\mc{X}, \mc{Y}) = \Vert P_{\mc{X}} - P_{\mc{Y}} \Vert_F/\sqrt{2} = \big(\sum_{i=1}^p \sin^2\theta_i\big)^{1/2}$
where $\theta_1,\ldots,\theta_p$ are the principal angles of $\mc{X}$ and $\mc{Y}$.  If $\mc{X}=\spn(X)$, then $P_{\mc{X}}=X(X^TX)^{-1}X^T$. It is also easy to see the the projection distance has $\text{O}(n)$-symmetry. We choose the projection distance mainly for its computational efficiency as it involves only matrix multiplication which has a time complexity $O(n^2)$ whereas the geodesic distance requires an SVD which has a time complexity of $O(n^3)$.

\subsection{Analysis of Principal Nested Grassmannians}\label{sub:nested_grassmann}

To determine the dimension of the nested submanifold that fits the data well enough, we can choose $p < m_1 < \ldots < m_k < n$ and estimate the projection onto these nested Grassmann manifolds. The ratio of expressed variance for each projection is the ratio of the variance of the projected data and the variance of the original data. With these ratios, we can choose the desired dimension according to the pre-specified percentage of expressed variance as one would do for choosing the number of principal components in PCA.

Alternatively, one can have a full analysis of principal nested Grassmanns (PNG) as follows. Starting from $\gr(p, n)$, one can reduce the dimension down to $\gr(p, p+1)$. Using the diffeomorphism between $\gr(p, n)$ and $\gr(p, n-p)$, we have $\gr(p, p+1) \cong \gr(1,p+1)$, and hence we can continue reducing the dimension down to $\gr(1, 2)$. The resulting sequence will be 
\[
\gr(p,n) \to \gr(p,n-1) \to \cdots \to \gr(p, p+1) = \gr(1, p+1) \to \gr(1, p) \to \cdots \to \gr(1,2).
\]
Furthermore, we can reduce the points on $\gr(1,2)$, which is a 1-dimensional manifold, to a 0-dimensional manifold, which is simply a point, by computing the Fr\'{e}chet mean (FM). We call this FM the nested Grassmannian mean (NGM) of $\mc{X}_1,\ldots,\mc{X}_N \in \gr(p,n)$. The NGM is unique since $\gr(1,2) \cong \R P^1$ can be identified as the half circle in $\R^2$ and the FM is unique in this case. Note that in general, the NGM will not be the same as the FM of $\mc{X}_1,\ldots,\mc{X}_N$ since the embedding/projection need not be isometric. The supervised PNG (sPNG) can be obtained similarly by replacing each projection with it supervised counterpart.

\subsection{Principal Scores}\label{sub:score}

Whenever we apply a projection $\pi_m: \gr(p, m+1) \to \gr(p,m)$ to the data, we might lose some information contained in the data. More specifically, since we project data on a $p(m+1-p)$-dimensional manifold to a $p(m-p)$-dimensional manifold, we need to describe this $p(m+1-p) - p(m-p) = p$ dimensional information loss during the projection. In PCA, this is done by computing the scores of each principal component, which are the transformed coordinates of each sample in the eigenspace of the covariance matrix. We can generalize the notion of principal scores to the nested Grassmanns as follows: For each $\mc{X} \in \gr(p, m+1)$, denote by $M_{\pi_m(\mc{X})}$, the fiber of $\pi_m(\mc{X})$, i.e.\ $M_{\pi_m(\mc{X})} = \pi_m^{-1}(\pi_m(\mc{X})) = \{ \mc{Y} \in \gr(p, m+1): \pi_m(\mc{Y}) = \pi_m(\mc{X})\}$, which is a $p$-dimensional submanifold of $\gr(p,m+1)$. An illustration of this fiber is given in Figure~\ref{fig:fiber}. Let $\widetilde{\mc{X}} = \iota_m(\pi_m(\mc{X}))$ and let the unit tangent vector $V \in T_{\widetilde{\mc{X}}}M_{\pi_m(\mc{X})}$ be the geodesic direction from $\widetilde{\mc{X}}$ to $\mc{X}$. Given a suitable basis on $T_{\widetilde{\mc{X}}}M_{\pi_m(\mc{X})}$, $V$ can be realized as a $p$-dimensional vector, and this will be the score vector of $\mc{X}$ associated with the projection $\pi_m$.

\begin{figure}
     \centering
     \begin{subfigure}[t]{0.45\textwidth}
         \centering
         \includegraphics[width=\textwidth]{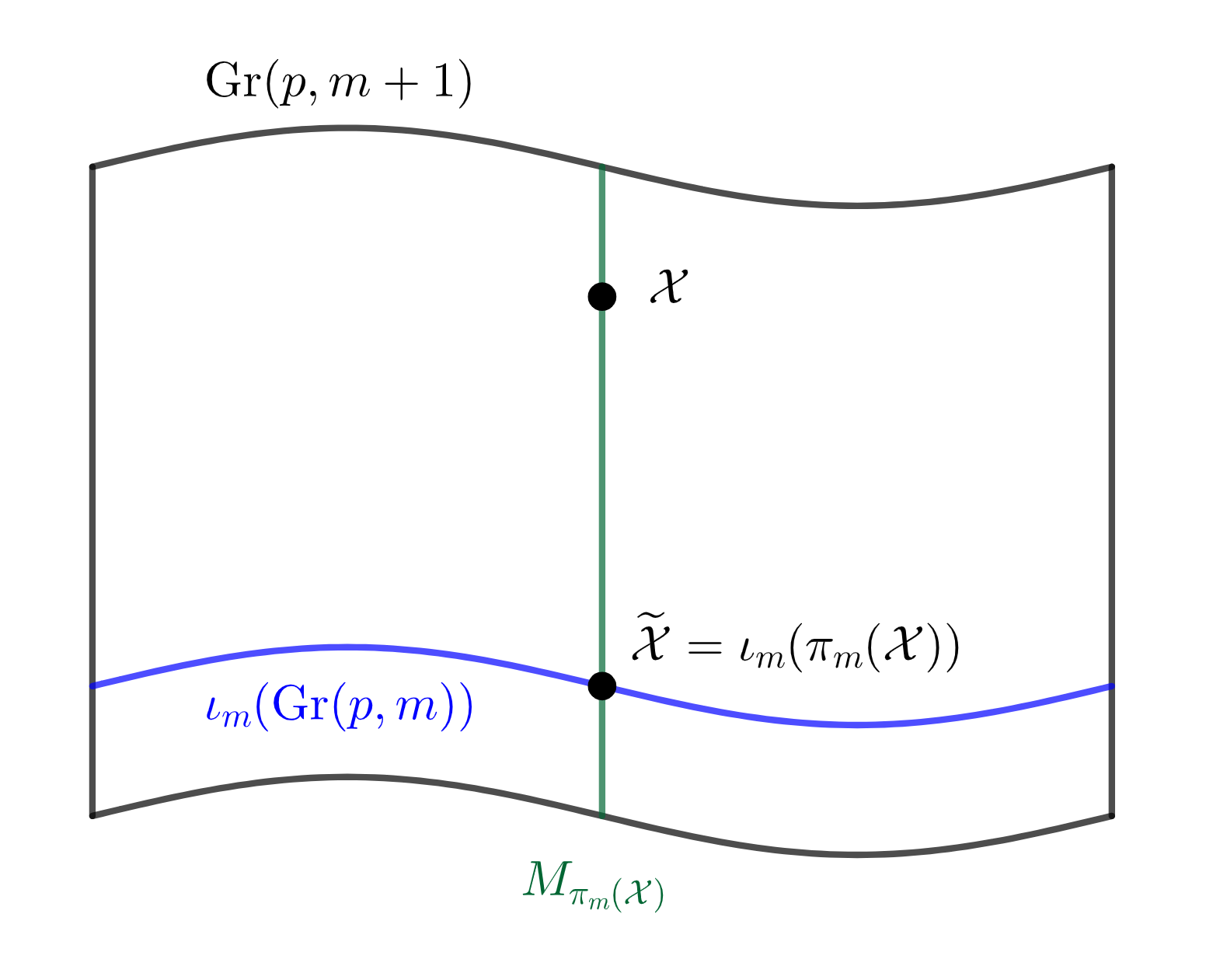}
         \caption{The fiber $M_{\pi_m(\mc{X})}$ in $\gr(p, m+1)$.}
         \label{fig:fiber}
     \end{subfigure}
     \hspace{1cm}
     \begin{subfigure}[t]{0.45\textwidth}
         \centering
         \includegraphics[width=\textwidth]{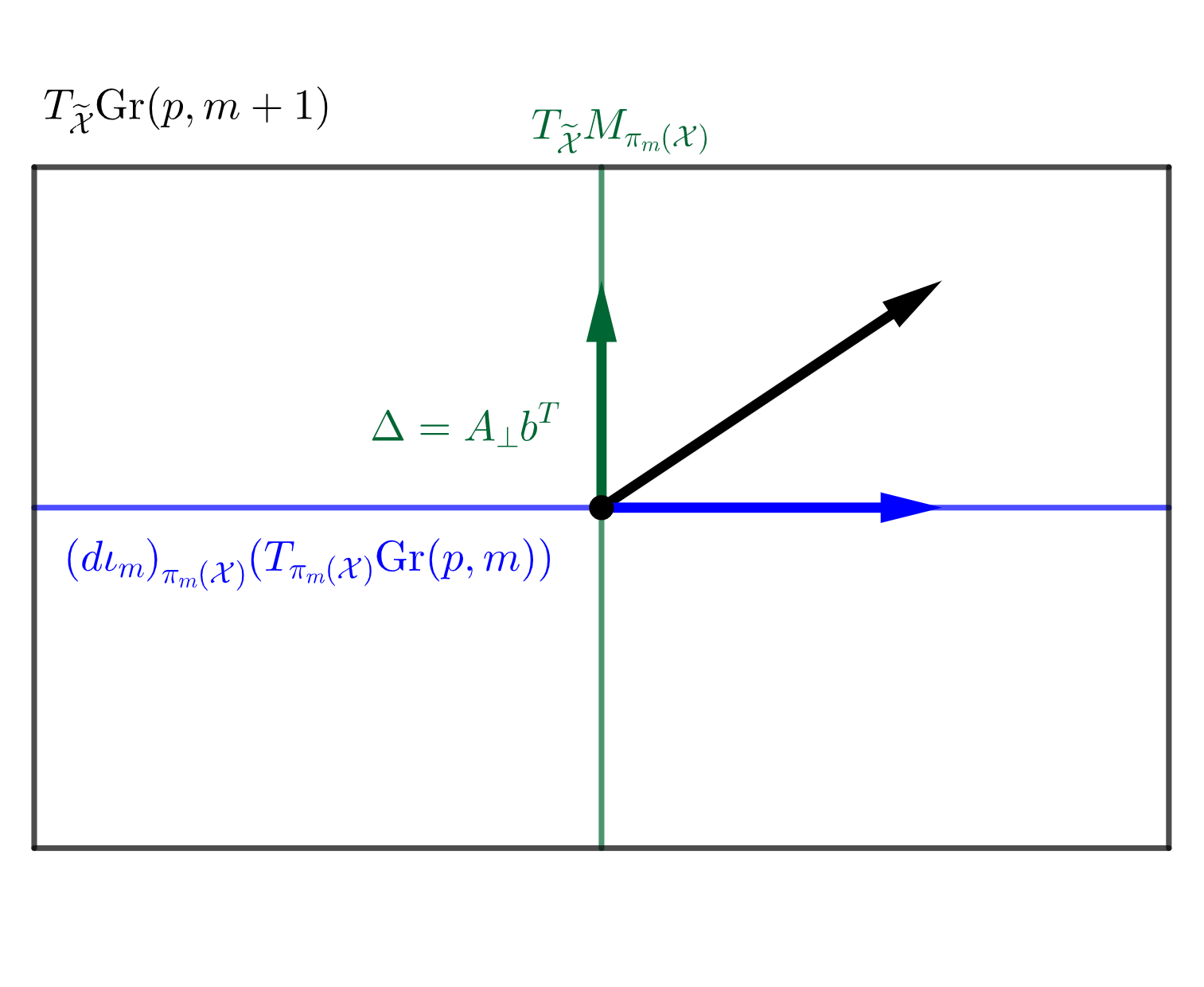}
         \caption{The horizontal space (in blue) and the vertical space (in green) induced by $\pi_m: \gr(m+1) \to \gr(p,m)$.}
         \label{fig:tangent}
     \end{subfigure}
        \caption{Illustrations of submanifolds induced by $\pi_m$.}
        \label{fig:fiber_tangent}
\end{figure}

By the definition of $M_{\pi_m(\mc{X})}$, we have the following decomposition of the tangent space of $\gr(p, m+1)$ at $\widetilde{\mc{X}}$ into the horizontal space and the vertical space induced by $\pi_m$, 
\[
T_{\widetilde{\mc{X}}}\gr(p, m+1) = T_{\widetilde{\mc{X}}}M_{\pi_m(\mc{X})} \oplus (d\iota_m)_{\pi_m(\mc{X})}(T_{\pi_m(\mc{X})}\gr(p,m)).
\]
An illustration of this decomposition is given in Figure~\ref{fig:tangent}. A tangent vector of $M_{\pi_m(\mc{X})}$ at $\widetilde{\mc{X}}$ is of the form $\Delta = A_{\perp}b^T$ where $A_{\perp}$ is any $(m+1)$-dim vector such that $[A\;A_{\perp}]$ is orthogonal and $b \in \R^p$. It is easy to check that $\pi_m(\spn(AA^TX + A_{\perp}b^T)) = \pi_m(\spn(X)) = \spn(A^TX)$. Hence a natural coordinate for the tangent vector $\Delta = A_{\perp}b^T$ is $b \in \R^p$, and the geodesic direction from $\widetilde{\mc{X}}$ to $\mc{X}$ would be $V = X^TA_{\perp}$. It is easy to see that $\|V\|_F = 1$ since $X$ has orthonormal columns. To reflect the distance between $\widetilde{\mc{X}}$ and $\mc{X}$, i.e.\ the reconstruction error, we define $d(\widetilde{\mc{X}}, \mc{X})V$ as the score vector for $\mc{X}$ associated with $\pi_m$. In the case of $\gr(1,2) \to \text{NGM}$, we use the sign distance to the NGM as the score. For complex nested Grassmanns however, the principal score associated with each projection is a $p$-dimensional complex vector. For the sake of visualization, we transform this $p$-dimensional complex vector to a $2p$-dimensional real vector. The procedure for computing the PNG and the principal scores is summarized in Algorithm~\ref{alg:png}.

\begin{remark}
Note that this definition of principal score is not intrinsic as it depends on the choice of basis. Indeed, it is impossible to choose a $p$-dimensional vector for the projection $\pi_m$ in an intrinsic way, since the only property of a map that is independent of bases is the rank of the map. A reasonable choice of basis is made by viewing the Grassmann $\gr(p,m)$ as a quotient manifold of $\st(p,m)$, which is a submanifold in $\R^{m \times p}$. This is how we define the principal score for nested Grassmanns.
\end{remark}

\begin{algorithm}
\caption{Principal Nested Grassmanns}\label{alg:png}
\SetKwInOut{KwIn}{Input}
\SetKwInOut{KwOut}{Output}
\KwIn{$\mc{X}_1, \ldots \mc{X}_N \in \gr(p, n)$}
\KwOut{An $N \times p(n-p)$ score matrix.}
Let $\mc{X} = (\mc{X}_1,\ldots, \mc{X}_N)$.\\
\tcc{$\gr(p, n) \to \gr(p, n-1) \to \cdots \to \gr(p, p+1)$}
\For{$m = n-1, \ldots, p+1$}{
    $\hat{v}, \hat{b} = \argmin_{v, b} \sum^N_{i=1}d^2(\mc{X}_i, \spn((I-vv^T)X_i+vb^T))$\; 
    \tcc{$v \in \R^{m+1}$, $\|v\| = 1$, $b \in \R^p$}
    \For{$i = 1, \ldots, N$}{
    Let $\widehat{\mc{X}}_i = \spn((I-\hat{v}\hat{v}^T)X_i+\hat{v}\hat{b}^T)$ where $\mc{X}_i = \spn(X_i)$.\\
    $V_{i,m} = d_g(\mc{X}_i,\widehat{\mc{X}}_i)X_i^Tv$ \tcp*{the score vector for $\mc{X}_i$}
    $\mc{X}_i \leftarrow \spn(R^TX_i)$ \tcp*{$R \in \st(m, m+1)$ such that $R^T\hat{v} = 0$.}
    }
}
\If{$p > 1$}{
    \For{$i = 1, \ldots, N$}{
        $\mc{X}_i \leftarrow \mc{X}_i^{\perp}$ \tcp*{$\gr(p, p+1) \cong \gr(1, p+1)$}
    }
    \tcc{$\gr(1, p+1) \to \gr(1, p) \to \cdots \to \gr(1,2)$}
    \For{$m = p, \ldots, 2$}{
        $\hat{v}, \hat{b} = \argmin_{v, b} \sum^N_{i=1}d^2(\mc{X}_i, \spn((I-vv^T)X_i+vb^T))$\; 
        \tcc{$v \in \R^{m+1}$, $\|v\| = 1$, $b \in \R^p$}
        \For{$i = 1, \ldots, N$}{
        Let $\widehat{\mc{X}}_i = \spn((I-\hat{v}\hat{v}^T)X_i+\hat{v}\hat{b}^T)$ where $\mc{X}_i = \spn(X_i)$.\\
        $V_{i,m} = d_g(\mc{X}_i,\widehat{\mc{X}}_i)X_i^Tv$ \tcp*{the score vector for $\mc{X}_i$}
        $\mc{X}_i \leftarrow \spn(R^TX_i)$ \tcp*{$R \in \st(m, m+1)$ such that $R^Tv = 0$.}
    }
    }
}
\tcc{$\gr(1,2) \to \text{NGM}$}
\tcc{Note that $\widehat{\mc{X}}_i \in \gr(1,2)$}
$\text{NGM} \leftarrow \text{FM}(\widehat{\mc{X}}_1, \ldots, \widehat{\mc{X}}_N)$\;
Let $U_i \in \R^2$ be the geodesic direction from the NGM to $\mc{X}_i$ such that $\|U_i\| = d_g(\text{NGM}, \mc{X}_i)$.\;
\For{$i = 1, \ldots, N$}{
    $V_{i,1} \in \R$ is such that $U_i = V_{i,1}\times \frac{U_1}{\|U_1\|}$\; 
}
Return the score matrix $S = [V_{i,m}]$. (Note that $V_{i,m} \in \R$ for $m = 1,\ldots,p$ and $V_{i,m} \in \R^p$ for $m = p+1, \ldots, n-1$.)
\end{algorithm}

\section{Experiments}\label{sec:exp}

In this section, we will demonstrate the performance of the proposed dimensionality reduction technique, i.e.\ PNG and sPNG, via experiments on synthetic and real data. The implementation\footnote{Our code is available at \url{https://github.com/cvgmi/NestedGrassmann}.} is based on the python library \texttt{pymanopt} \citep{townsend2016pymanopt} and we use the steepest descent algorithm for the optimization (with default parameters in pymanopt). The optimization was performed on a desktop with 3.6GHz Intel i7 processors and took about 30 seconds to converge. 
\subsection{Synthetic Data}%
\label{sub:synthetic_data}

In this subsection, we compare the performance of the projection and the geodesic distances respectively. The questions we will answer are the following. (1) From Section~\ref{sub:distance}, we see that using projection distance is more efficient than using the geodesic distance. But how do they perform compared to each other under varying dimension $n$ and variance level $\sigma^2$? (2) Is our method of dimensionality reduction "better" than PGA? Under what conditions does our method outperform PGA? 

\subsubsection{Projection and Geodesic Distance Comparisons}%
\label{sub:comparison_of_projective_f_norm_and_geodesic_distance}

The procedure we used to generate random points on $\gr(p, n)$ for the synthetic data experiments is as follows: First, we generate $N$ points from a uniform distribution on $\st(p, m)$ \cite[Ch.\ 2.5]{chikuse2003statistics}, generate $A$ from the uniform distribution on $\st(m, n)$, and generate $B$ as an $n \times p$ matrix with i.i.d entries from $N(0,0.1)$. Then we compute $\widetilde{\mc{X}}_i = \spn(AX_i + (I-AA^T)B) \in \gr(p, n)$. Finally, we compute $\mc{X}_i = \text{Exp}_{\widetilde{\mc{X}}_i}(\sigma U_i)$, where $U_i = \widetilde{U}_i/\|\widetilde{U}_i\|$ and $\widetilde{U}_i \in T_{\widetilde{\mc{X}}_i}\gr(p, n)$, to include some perturbation.

This experiment involves comparing the performance of the NG representation in terms of the explained variance, under different levels of data variance. In this experiment, we set $N=50$, $n=10$, $m=3$, and $p=1$ and $\sigma$ is ranging from 0.5 to 1. The results are averaged over $100$ repetitions and are shown in Figure~\ref{fig:v_ratio_dist}. From these results, we can see that the explained variance for the projection distance and the geodesic distance are indistinguishable but using projection distance leads to much faster convergence than when using the geodesic distance. The reason is that when two points on the Grassmann manifold are close, the geodesic distance can be well-approximated by the projection distance. When the algorithm converges, the original point $\mc{X}_i$ and the reconstructed point $\hat{\mc{X}}_i$ should be close and the geodesic distance can thus be well-approximated by the projection distance. Therefore, for all the experiments in the next section, we use the projection distance for the sake of efficiency.

\begin{figure}[H]
    \centering
    \includegraphics[width=0.5\linewidth]{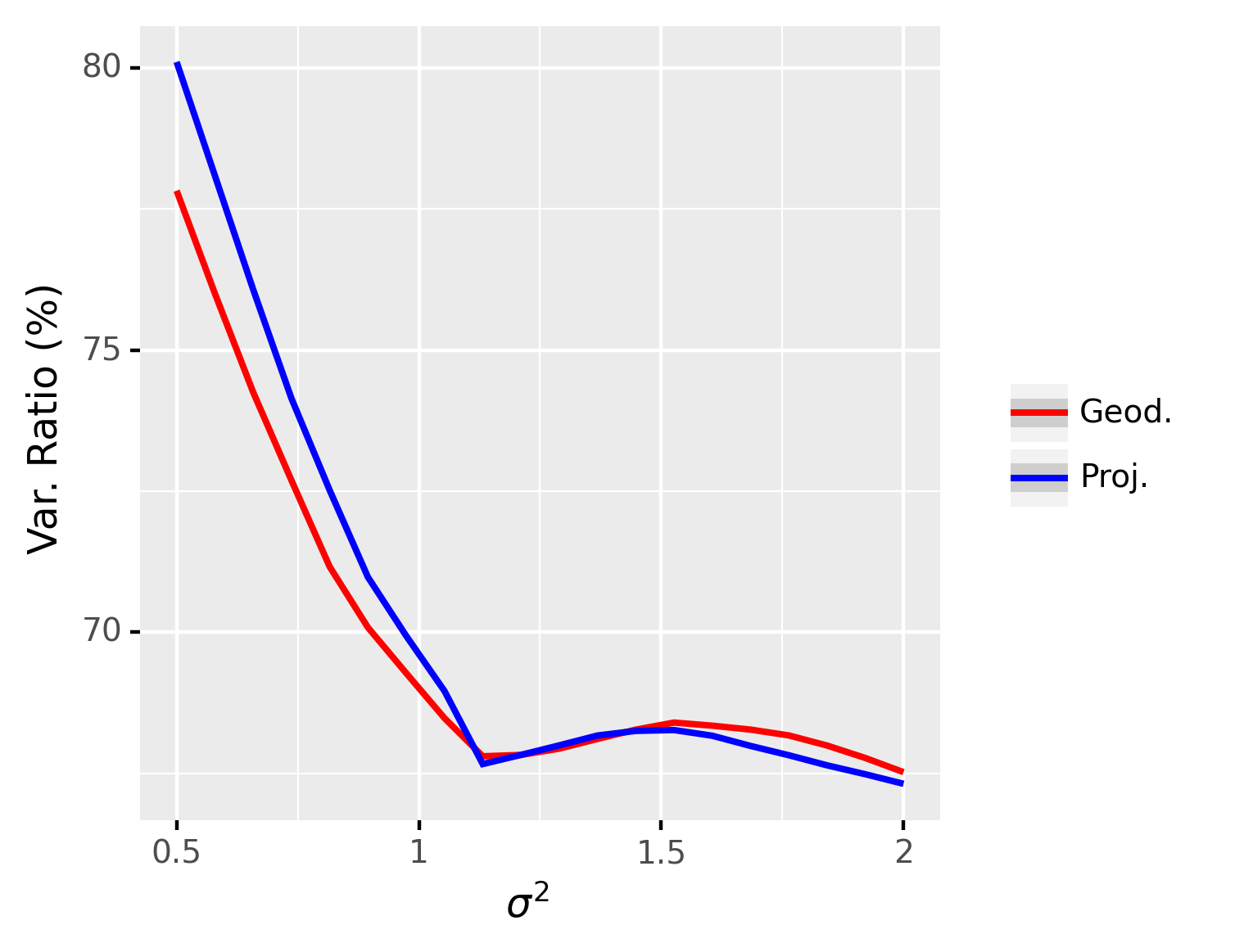}
    \caption{Comparison of the NG representations based on the projection and geodesic distances using the expressed variance.}
    \label{fig:v_ratio_dist}
\end{figure}

\subsubsection{Comparison of PNG and PGA}%
\label{sub:comparison_of_nested_grassmann_and_pga}
Now we compare the proposed PNG to PGA. In order to have a fair comparison between PNG and PGA, we define the principal components of PNG as the principal components of the scores obtained as in Section~\ref{sub:score}. Similar to the previous experiment, we set $N=50$, $n=10$, $m=5$, $p=2$, and $\sigma=0.01, 0.05, 0.1, 0.5$ and apply the same procedure to generate synthetic data. The results are averaged over $100$ repetitions and are shown in Figure~\ref{fig:v_ratio}.

\begin{figure}[H]
    \centering
    \includegraphics[width=0.6\linewidth]{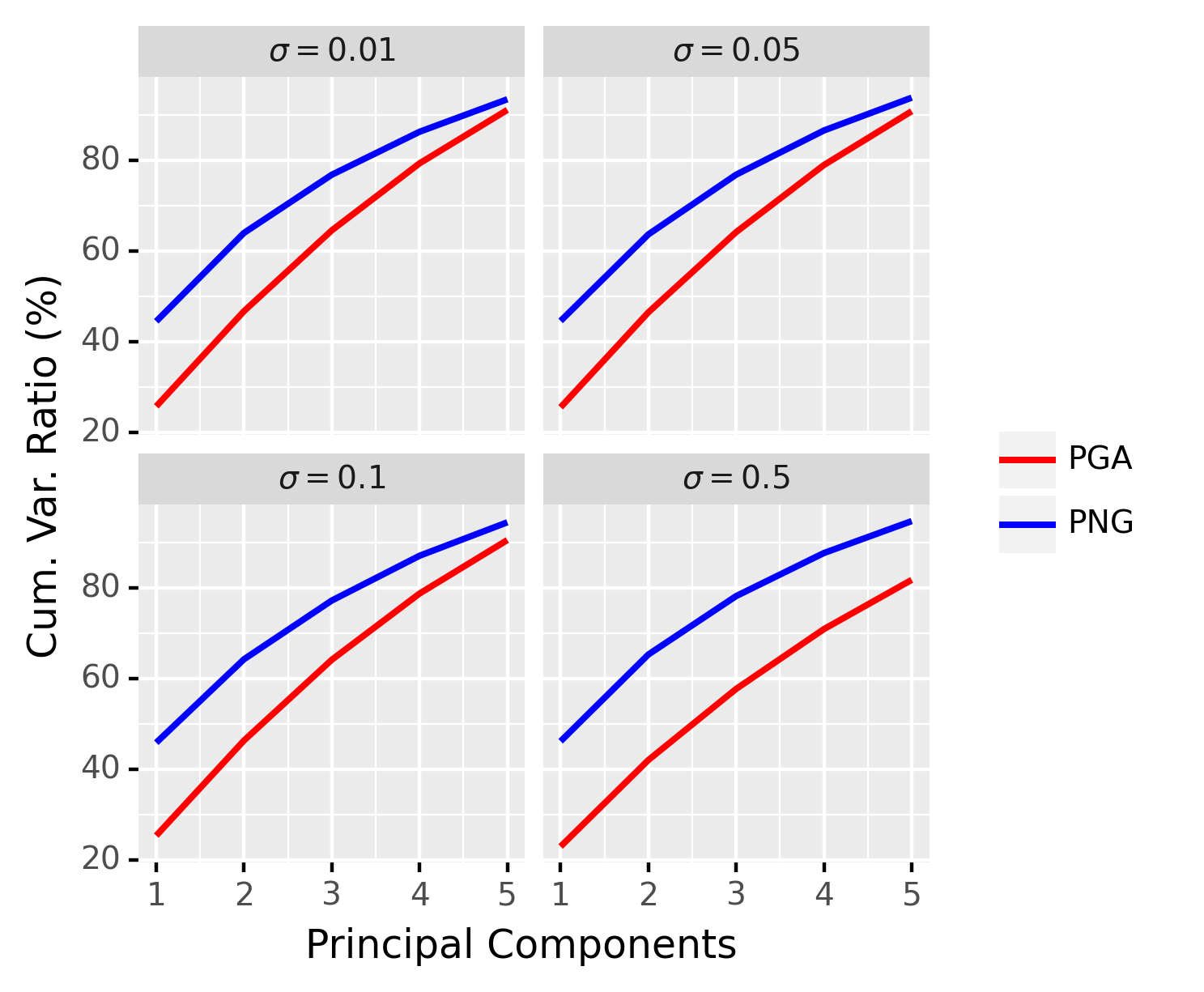}
    \caption{Comparison of PNG and PGA under different levels of noise.}
    \label{fig:v_ratio}
\end{figure}

From Figure~\ref{fig:v_ratio}, we can see that our method outperforms PGA by virtue of the fact that it is able to capture a larger amount of variance contained in the data. 
We can see that when the variance is small, the improvement of PNG over PGA is less significant, whereas, our method is significantly better for the large data variance case (e.g.\ comparing $\sigma = 0.5$ and $\sigma=0.01$). Note that when the variance in the data is small, i.e.\ the data are tightly clustered around the FM, and PGA captures the essence of the data well. However, the requirement in PGA on the geodesic submanifold to pass through the anchor point, namely the FM, is not meaningful for data with large variance as explained through the following simple example. Consider, a few data points spread out on the equator of a sphere. The FM in this case is likely to be the north pole of the sphere if we restrict ourselves to the upper hemisphere. Thus, the geodesic submanifold computed by PGA will pass through this FM. However, what is more meaningful is a submanifold corresponding to the equator, which is what a nested spheres representation \citep{jung2012analysis} in this case yields. In similar vein, for data with large variance on a Grassmann manifold, our NG representation will yield a more meaningful representation than PGA.
\subsection{Application to Planar Shape Analysis}%
\label{sub:applications_to_planar_shape_analysis}

We now apply our method to planar (2-dimensional) shape analysis. A planar shape $\sigma$ can be represented as an ordered set of $k > 2$ points in $\R^2$, called a $k$-ad or a configuration. Here we assume that these $k$ points are not all identical. Denote the configuration by $X$ which is a $k \times 2$ matrix. Let $H$ be the $(k-1) \times k$ Helmert submatrix \citep[Ch.\ 2.5]{dryden2016statistical}. Then $Z = HX/\|HX\|_F$ is called the \emph{pre-shape} of $X$ from which the information about translation and scaling is removed. The space of all pre-shapes is called the pre-shape space, denoted by $\mc{S}^k_2$. By definition, the pre-shape space is a $(2k - 3)$-dimensional sphere. The shape is obtained by removing the rotation from the pre-shape, and thus the shape space is $\Sigma^k_2 = \mc{S}^k_2/\text{O}(2)$. It was shown by \cite{kendall1984shape} that $\Sigma^k_2$ is a smooth manifold and, when equipped with the quotient metric, is isometric to the complex projective space $\Cpx P^{k-2}$ equipped with the Fubini-Study metric (up to a scale factor) which is a special case of the complex Grassmannians, i.e.\ $\Cpx P^{k-2} \cong \gr(1,\Cpx^{k-1})$. Hence, we can apply the proposed PNG to planar shapes. For planar shapes, we also compare with the recently proposed principal nested shape spaces (PNSS) \citep{dryden2019principal}, which is an application of PNS on the pre-shape space. We will now demonstrate how the PNG performs compared to PGA and PNSS using some simple examples of planar shapes and the OASIS dataset.

\paragraph{Examples of Planar Shapes} We take three datasets: \texttt{digit3}, \texttt{gorf}, and \texttt{gorm}, from the R package \texttt{shapes}~\citep{dryden2021shapes}. The \texttt{digit3} dataset consists of 30 shapes of the digit 3, each of which is represented by 13 points in $\R^2$; the \texttt{gorf} dataset consists of 30 shapes of female gorilla skull, each of which is represented by 8 points in $\R^2$; the \texttt{gorm} dataset consists of 29 shapes of male gorilla skull, each of which is represented by 8 points in $\R^2$. Example shapes from these three datasets are shown in Figure~\ref{fig:example_shapes}. The cumulative ratios of variance explained by the first 5 principal components\footnote{Here the principal components in PNG and PGA are complex whereas the principal components in PNSS are real. Hence, we choose 5 principal components in PNG and PGA and 10 principal components in PNSS, so that the reduced (real) dimension is 10 in all three cases.} of PNG, PGA, and PNSS are shown in Figure~\ref{fig:shape_var_ratio}. It can be seen from Figure~\ref{fig:shape_var_ratio} that the proposed PNG achieves higher explained variance than PGA and PNSS respectively in most cases.

\begin{figure}
     \centering
     \begin{subfigure}[b]{0.2\textwidth}
         \centering
         \includegraphics[width=\textwidth]{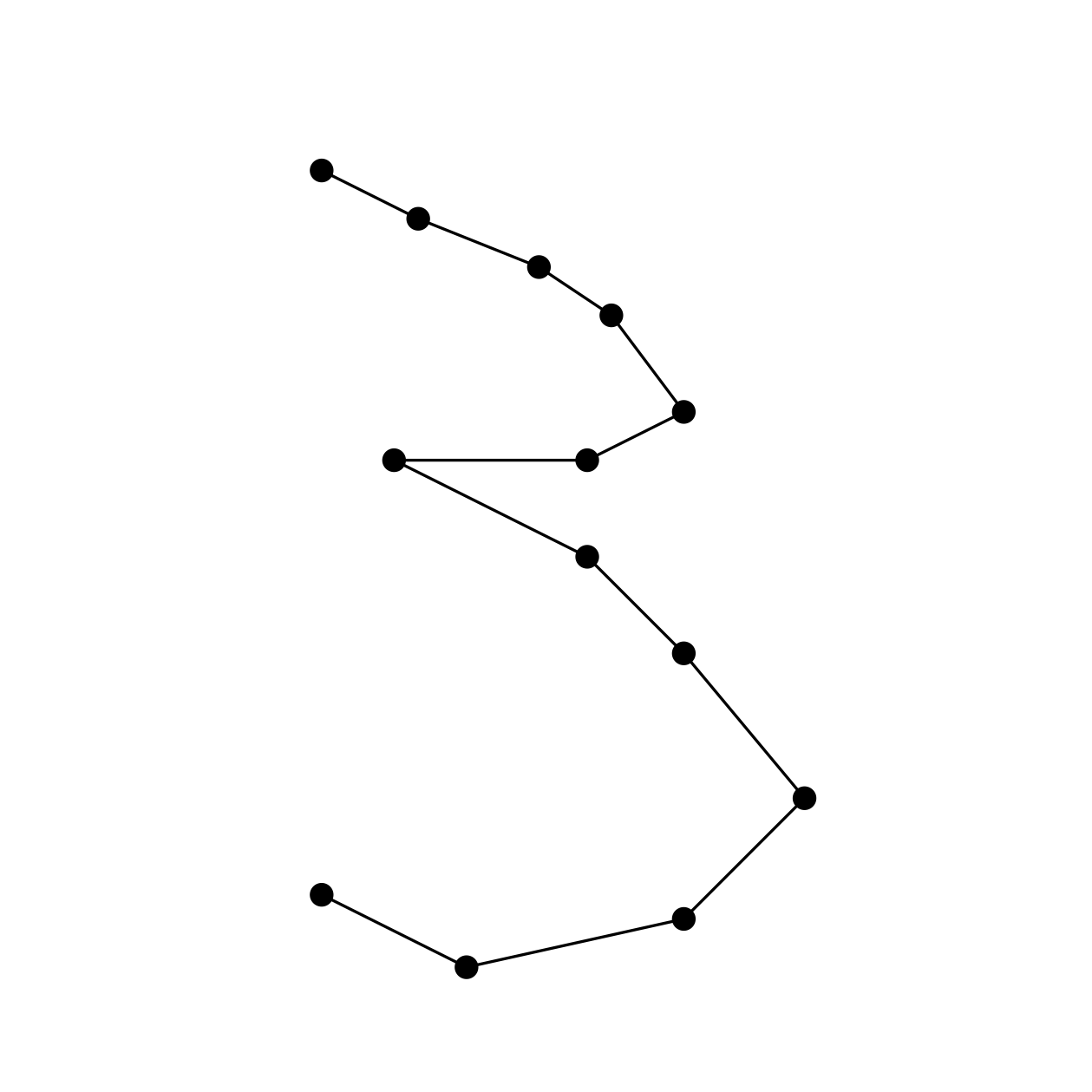}
         \caption{\texttt{digit3}}
     \end{subfigure}
     \hspace{1cm}
     \begin{subfigure}[b]{0.2\textwidth}
         \centering
         \includegraphics[width=\textwidth]{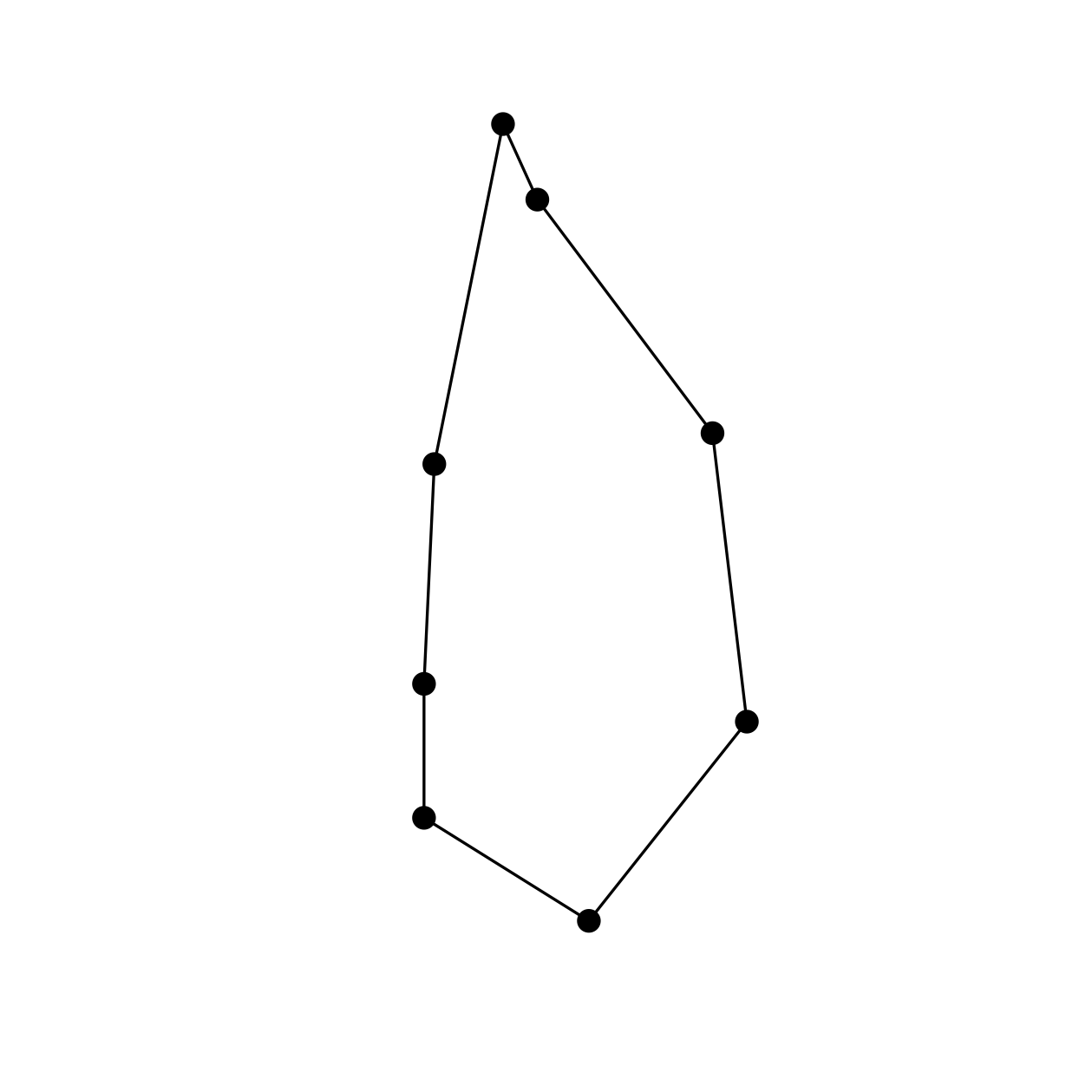}
         \caption{\texttt{gorf}}
     \end{subfigure}
     \hspace{1cm}
     \begin{subfigure}[b]{0.2\textwidth}
         \centering
         \includegraphics[width=\textwidth]{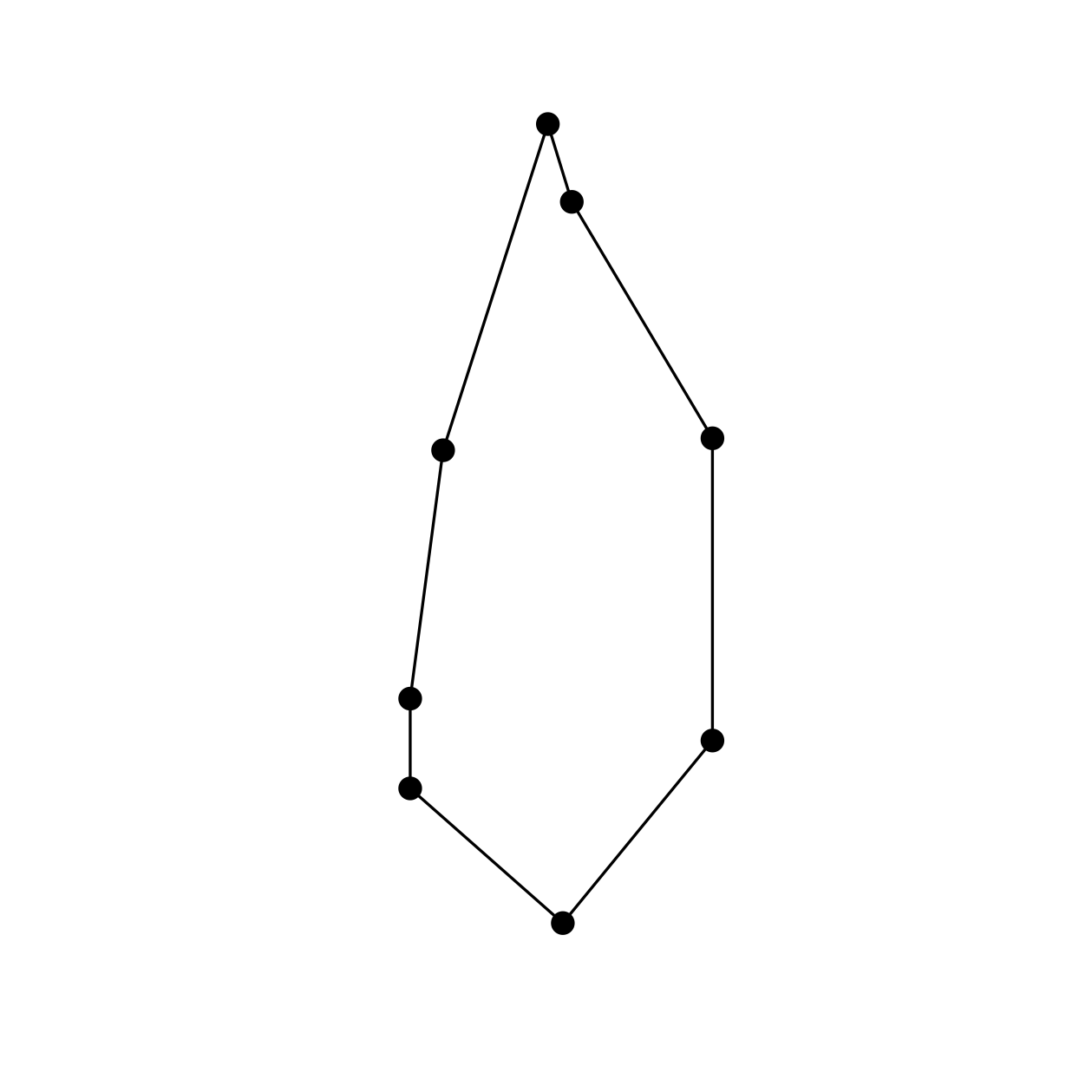}
         \caption{\texttt{gorm}}
     \end{subfigure}
        \caption{Example shapes from the three datasets.}
        \label{fig:example_shapes}
\end{figure}

\begin{figure}
    \centering
    \includegraphics[width=\textwidth]{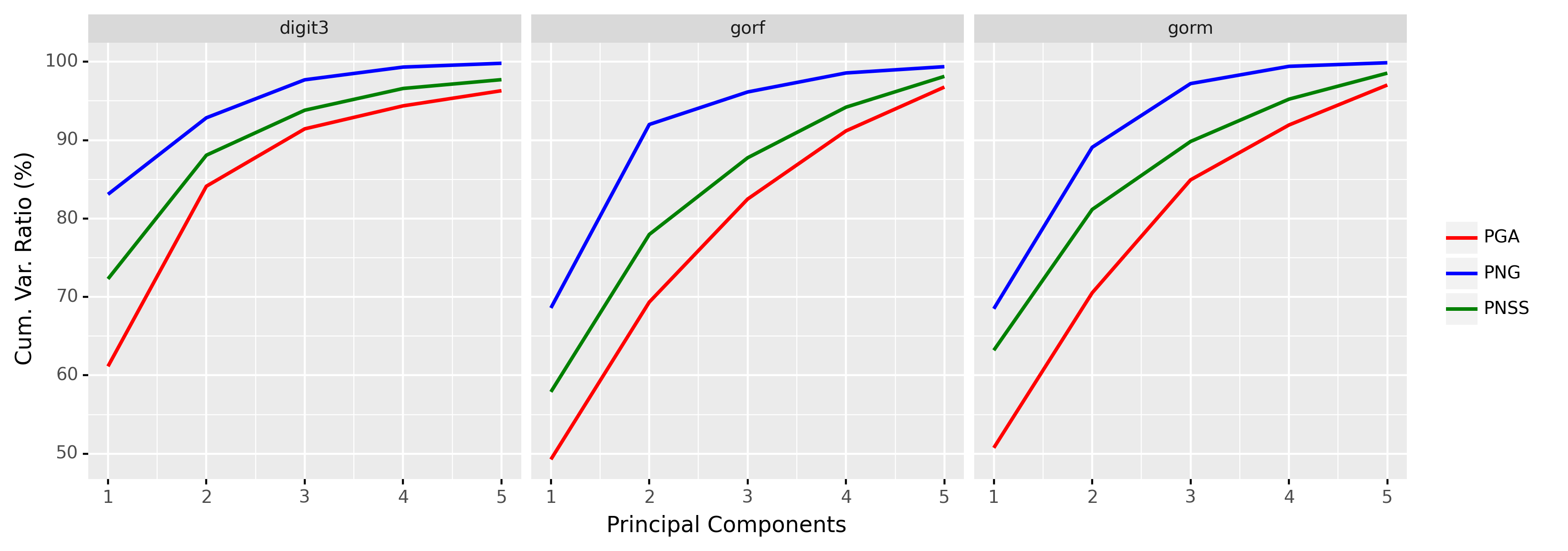}
    \caption{Cumulative explained variance by the first 5 principal components of PNG, PGA, and PNSS.}
    \label{fig:shape_var_ratio}
\end{figure}

\paragraph{OASIS Corpus Callosum Data Experiment}
The OASIS database \citep{marcus2007open} is a publicly available database that contains T1-MR brain scans of subjects of age ranging from 18 to 96. In particular, it includes subjects that are clinically diagnosed with mild to moderate Alzheimer’s disease. We further classify them into three groups: \emph{young} (aged between 10 and 40), \emph{middle-aged} (aged between 40 and 70), and \emph{old} (aged above 70). For demonstration, we randomly choose 4 brain scans within each decade, totalling 36 brain scans. From each scan, the Corpus Callosum (CC) region is segmented and 250 points are taken on the boundary of the CC region. See Figure~\ref{fig:CC} for samples of the segmented corpus callosi. In this case, the shape space is $\Sigma^{248}_2 \cong \mathbb{C}P^{248} \cong \mathsf{Gr}(1, \mathbb{C}^{249})$. Results of application of the three methods to this data are shown in Figure~\ref{fig:OASIS_result}.

\begin{figure}[H]
    \centering
    \includegraphics[width=0.6\linewidth]{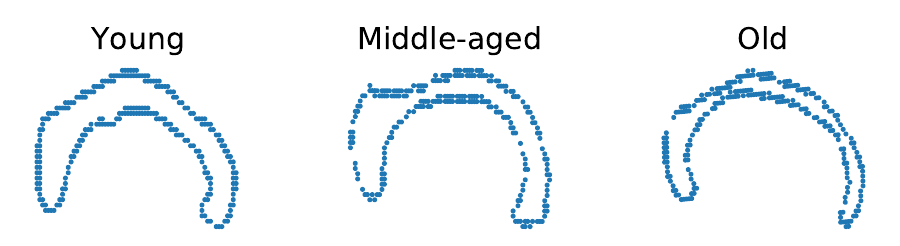}
    \caption{Example Corpus Callosi shapes from three distinct age groups, each depicted using the boundary point sets.}
    \label{fig:CC}
\end{figure}

\begin{figure}[H]
    \centering
    \includegraphics[width=0.6\linewidth]{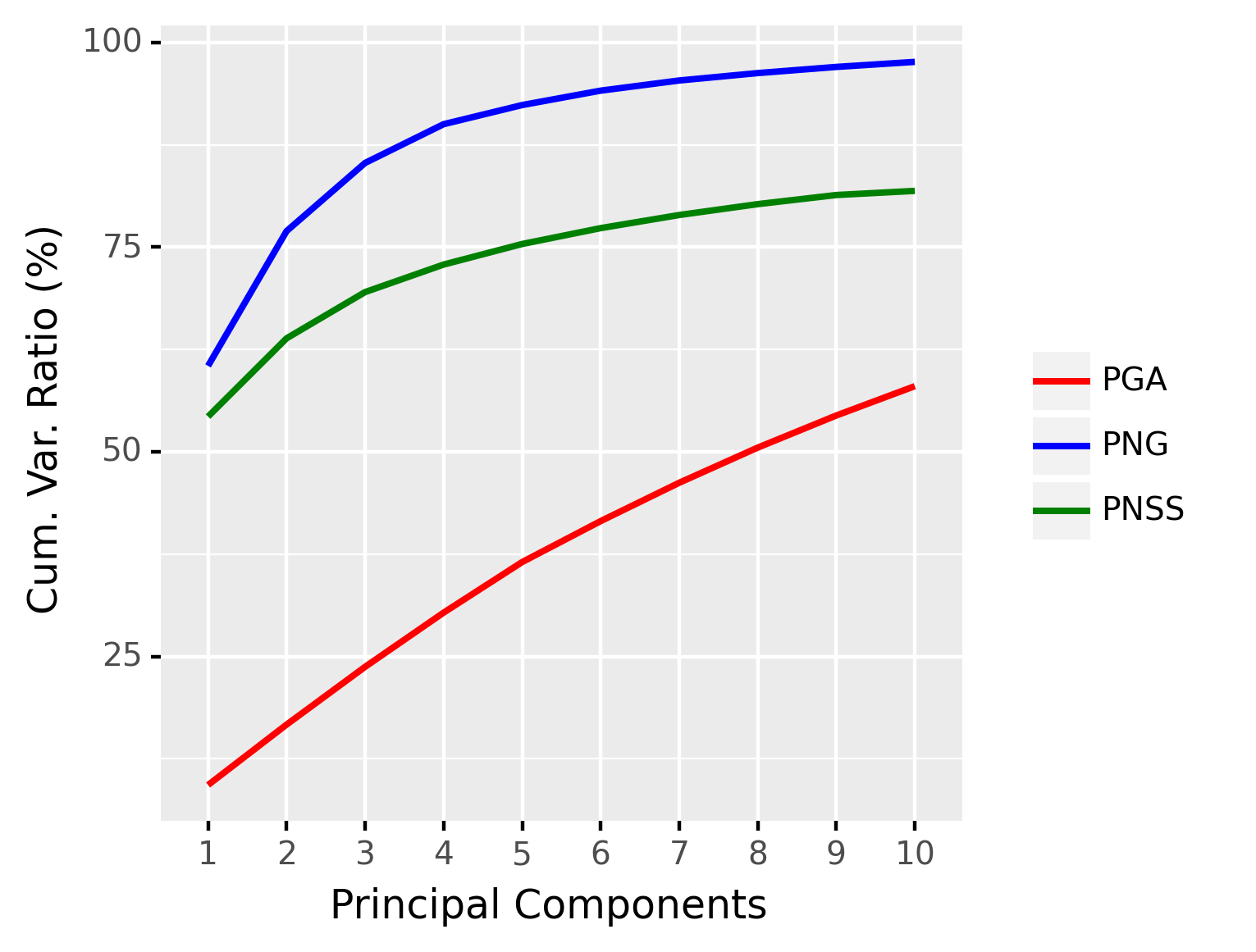}
    \caption{Cumulative explained variance captured by the first 10 principal components of PNG, PGA, and PNSS respectively.}
    \label{fig:OASIS_result}
\end{figure}

Since the data are divided into three groups (young, middle-aged, and old), we can apply the sPNG described in Section~\ref{sub:supervised_dr} to reduce the dimension. {\it The purpose of this experiment is not to demonstrate state-of-the-art classification accuracy for this dataset. Instead, our goal here is to demonstrate that the proposed nested Grassmann representation in a supervised setting is much more discriminative than the competition, namely the supervised PGA}. Hence, we choose a simple classifier such as the support vector machine (SVM)~\cite{vapnik1995nature} to highlight the aforementioned discriminative power of the nested Grassmann over PGA.

For comparison, the PGA can be easily extended to {\it supervised PGA} (sPGA) by first diffeomorphically mapping all the data to the tangent space anchored at the FM and then performing supervised PCA \cite{bair2006prediction,barshan2011supervised} on the tangent space. However, generalizing PNSS to the supervised case is nontrivial and is beyond the scope of this paper. Therefore, we limit our comparison to the unsupervised PNSS. In this demonstration, we apply an SVM on the scores obtained from different dimension reduction algorithms, and we choose only the first three principal scores to show that even with the 3-dimensional representation of the original shapes, we can still achieve good classification results. The results are shown in Table~\ref{tab:classification}. These results are in accordance with our expectation since in sPNG, we seek a projection that minimizes the within-group variance while maximizing the between-group variance. However, as we observed earlier, the constraint of requiring the geodesic submanifold to pass through the FM is not well suited for this dataset which has a large variance across the data. This accounts for why the sPNG exhibits far superior performance compared to sPGA in accuracy.

\begin{table}[H]
\centering
\begin{tabular}{lc}
    \toprule
    Method & Accuracy\\
    \midrule
    sPNG & 83.33\% \\
    PNG	& 75\% \\
    sPGA & 66.67\% \\
    PGA	& 63.89\% \\
    PNSS & 80.56\% \\
    \bottomrule
\end{tabular}
\caption{Classification accuracies for sPGA and sPNG respectively.}
\label{tab:classification}
\end{table}

\section{Conclusion}\label{sec:conc}

In this work, we proposed a novel nested geometric structure for homogeneous spaces and used this structure to achieve dimensionality reduction for data residing in Grassmann manifolds. We also discuss how this nested geometric structure served as a natural generalization of other existing nested geometric structures in literature namely, spheres and the manifold of SPD matrices. Specifically, we showed that a lower dimensional Grassmann manifold can be embedded into a higher dimensional Grassmann manifold and via this embedding we constructed a sequence of nested Grassmann manifolds. Compared to the PGA, which is designed for general Riemannian manifolds, the proposed method can capture a higher percentage of data variance after reducing the dimensionality. This is primarily because our method, unlike the PGA, does not require the submanifold to be a geodesic submanifold and to pass through the Fr\'{e}chet mean of the data.
Succinctly, the nested Grassmann structure allows us to fit the data to a larger class of submanifolds than PGA.
We also proposed a supervised dimensionality reduction technique  which simultaneously differentiates data classes while reducing dimensionality. Efficacy of our method was demonstrated on the OASIS Corpus Callosi data for dimensionality reduction and classification. We showed that our method  outperforms the widely used PGA and the recently proposed PNSS by a large margin.



\acks{This research was in part funded by the NSF grant IIS-1724174, the NIH NINDS and NIA via R01NS121099 to Vemuri and the MOST grant 110-2118-M-002-005-MY3 to Yang.}

%
\ethics{The work follows appropriate ethical standards in conducting research and writing the manuscript, following all applicable laws and regulations regarding treatment of animals or human subjects.}

\coi{We declare we don't have conflicts of interest.}

\bibliography{reference}

\begin{thebibliography}{32}
\providecommand{\natexlab}[1]{#1}
\providecommand{\url}[1]{\texttt{#1}}
\expandafter\ifx\csname urlstyle\endcsname\relax
  \providecommand{\doi}[1]{doi: #1}\else
  \providecommand{\doi}{doi: \begingroup \urlstyle{rm}\Url}\fi

\bibitem[Absil et~al.(2004)Absil, Mahony, and Sepulchre]{absil2004riemannian}
P-A Absil, Robert Mahony, and Rodolphe Sepulchre.
\newblock Riemannian geometry of {G}rassmann manifolds with a view on
  algorithmic computation.
\newblock \emph{Acta Applicandae Mathematica}, 80\penalty0 (2):\penalty0
  199--220, 2004.

\bibitem[Bair et~al.(2006)Bair, Hastie, Paul, and
  Tibshirani]{bair2006prediction}
Eric Bair, Trevor Hastie, Debashis Paul, and Robert Tibshirani.
\newblock Prediction by supervised principal components.
\newblock \emph{Journal of the American Statistical Association}, 101\penalty0
  (473):\penalty0 119--137, 2006.

\bibitem[Banerjee et~al.(2017)Banerjee, Chakraborty, and
  Vemuri]{banerjee2017sparse}
Monami Banerjee, Rudrasis Chakraborty, and Baba~C Vemuri.
\newblock Sparse exact {PGA} on {R}iemannian manifolds.
\newblock In \emph{Proceedings of the IEEE International Conference on Computer
  Vision}, pages 5010--5018, 2017.

\bibitem[Barshan et~al.(2011)Barshan, Ghodsi, Azimifar, and
  Jahromi]{barshan2011supervised}
Elnaz Barshan, Ali Ghodsi, Zohreh Azimifar, and Mansoor~Zolghadri Jahromi.
\newblock Supervised principal component analysis: Visualization,
  classification and regression on subspaces and submanifolds.
\newblock \emph{Pattern Recognition}, 44\penalty0 (7):\penalty0 1357--1371,
  2011.

\bibitem[Basser et~al.(1994)Basser, Mattiello, and LeBihan]{basser1994mr}
Peter~J Basser, James Mattiello, and Denis LeBihan.
\newblock {MR} diffusion tensor spectroscopy and imaging.
\newblock \emph{Biophysical journal}, 66\penalty0 (1):\penalty0 259--267, 1994.

\bibitem[Callaghan(1993)]{callaghan1993principles}
Paul~T Callaghan.
\newblock \emph{{Principles of Nuclear Magnetic Resonance Microscopy}}.
\newblock Oxford University Press on Demand, 1993.

\bibitem[Chakraborty et~al.(2016)Chakraborty, Seo, and
  Vemuri]{chakraborty2016efficient}
Rudrasis Chakraborty, Dohyung Seo, and Baba~C Vemuri.
\newblock An efficient exact-{PGA} algorithm for constant curvature manifolds.
\newblock In \emph{Proceedings of the IEEE Conference on Computer Vision and
  Pattern Recognition}, pages 3976--3984, 2016.

\bibitem[Cheeger and Ebin(1975)]{cheeger1975comparison}
Jeff Cheeger and David~Gregory Ebin.
\newblock \emph{Comparison theorems in Riemannian geometry}, volume~9.
\newblock North-holland Amsterdam, 1975.

\bibitem[Chikuse(2003)]{chikuse2003statistics}
Yasuko Chikuse.
\newblock \emph{Statistics on Special Manifolds}, volume 174.
\newblock Springer Science \& Business Media, 2003.

\bibitem[Damon and Marron(2014)]{damon2014backwards}
James Damon and JS~Marron.
\newblock Backwards principal component analysis and principal nested
  relations.
\newblock \emph{Journal of Mathematical Imaging and Vision}, 50\penalty0
  (1-2):\penalty0 107--114, 2014.

\bibitem[Dryden(2021)]{dryden2021shapes}
I.~L. Dryden.
\newblock \emph{{\tt shapes} package}.
\newblock R Foundation for Statistical Computing, Vienna, Austria, 2021.
\newblock URL \url{http://www.R-project.org}.
\newblock Contributed package, Version 1.2.6.

\bibitem[Dryden and Mardia(2016)]{dryden2016statistical}
Ian~L Dryden and Kanti~V Mardia.
\newblock \emph{Statistical shape analysis: with applications in R}, volume
  995.
\newblock John Wiley \& Sons, 2016.

\bibitem[Dryden et~al.(2019)Dryden, Kim, Laughton, Le,
  et~al.]{dryden2019principal}
Ian~L Dryden, Kwang-Rae Kim, Charles~A Laughton, Huiling Le, et~al.
\newblock Principal nested shape space analysis of molecular dynamics data.
\newblock \emph{Annals of Applied Statistics}, 13\penalty0 (4):\penalty0
  2213--2234, 2019.

\bibitem[Edelman et~al.(1998)Edelman, Arias, and Smith]{edelman1998geometry}
Alan Edelman, Tom{\'a}s~A Arias, and Steven~T Smith.
\newblock The geometry of algorithms with orthogonality constraints.
\newblock \emph{SIAM Journal on Matrix Analysis and Applications}, 20\penalty0
  (2):\penalty0 303--353, 1998.

\bibitem[Feragen et~al.(2014)Feragen, Nielsen, Jensen, du~Plessis, and
  Lauze]{feragen2014geometry}
Aasa Feragen, Mads Nielsen, Eva Bj{\o}rn~Vedel Jensen, Andrew du~Plessis, and
  Fran{\c{c}}ois Lauze.
\newblock Geometry and statistics: Manifolds and stratified spaces.
\newblock \emph{Journal of Mathematical Imaging and Vision}, 50\penalty0
  (1):\penalty0 1--4, 2014.

\bibitem[Fletcher et~al.(2004)Fletcher, Lu, Pizer, and
  Joshi]{fletcher2004principal}
P~Thomas Fletcher, Conglin Lu, Stephen~M Pizer, and Sarang Joshi.
\newblock Principal geodesic analysis for the study of nonlinear statistics of
  shape.
\newblock \emph{IEEE Transactions on Medical Imaging}, 23\penalty0
  (8):\penalty0 995--1005, 2004.

\bibitem[Goresky and MacPherson(1988)]{goresky1988stratified}
Mark Goresky and Robert MacPherson.
\newblock \emph{Stratified Morse Theory}.
\newblock Springer, 1988.

\bibitem[Harandi et~al.(2018)Harandi, Salzmann, and
  Hartley]{harandi2018dimensionality}
Mehrtash Harandi, Mathieu Salzmann, and Richard Hartley.
\newblock Dimensionality reduction on {SPD} manifolds: The emergence of
  geometry-aware methods.
\newblock \emph{IEEE Transactions on Pattern Analysis and Machine
  Intelligence}, 40\penalty0 (1):\penalty0 48--62, 2018.

\bibitem[Hastie and Stuetzle(1989)]{hastie1989principal}
Trevor Hastie and Werner Stuetzle.
\newblock Principal curves.
\newblock \emph{Journal of the American Statistical Association}, 84\penalty0
  (406):\penalty0 502--516, 1989.

\bibitem[Hauberg(2016)]{hauberg2016principal}
S{\o}ren Hauberg.
\newblock Principal curves on {R}iemannian manifolds.
\newblock \emph{IEEE {T}ransactions on {P}attern {A}nalysis and {M}achine
  {I}ntelligence}, 38\penalty0 (9):\penalty0 1915--1921, 2016.

\bibitem[Helgason(1979)]{helgason1979differential}
Sigurdur Helgason.
\newblock \emph{Differential geometry, Lie groups, and symmetric spaces}.
\newblock Academic press, 1979.

\bibitem[Huckemann et~al.(2010)Huckemann, Hotz, and
  Munk]{huckemann2010intrinsic}
Stephan Huckemann, Thomas Hotz, and Axel Munk.
\newblock Intrinsic shape analysis: {G}eodesic {PCA} for {R}iemannian manifolds
  modulo isometric {L}ie group actions.
\newblock \emph{Statistica Sinica}, pages 1--58, 2010.

\bibitem[Jaquier and Rozo(2020)]{jaquier2020high}
No{\'e}mie Jaquier and Leonel Rozo.
\newblock High-dimensional bayesian optimization via nested riemannian
  manifolds.
\newblock \emph{Advances in Neural Information Processing Systems},
  33:\penalty0 20939--20951, 2020.

\bibitem[Jung et~al.(2012)Jung, Dryden, and Marron]{jung2012analysis}
Sungkyu Jung, Ian~L Dryden, and JS~Marron.
\newblock Analysis of principal nested spheres.
\newblock \emph{Biometrika}, 99\penalty0 (3):\penalty0 551--568, 2012.

\bibitem[Kendall(1984)]{kendall1984shape}
David~G Kendall.
\newblock Shape manifolds, {P}rocrustean metrics, and complex projective
  spaces.
\newblock \emph{Bulletin of the London Mathematical Society}, 16\penalty0
  (2):\penalty0 81--121, 1984.

\bibitem[Marcus et~al.(2007)Marcus, Wang, Parker, Csernansky, Morris, and
  Buckner]{marcus2007open}
Daniel~S Marcus, Tracy~H Wang, Jamie Parker, John~G Csernansky, John~C Morris,
  and Randy~L Buckner.
\newblock {O}pen {A}ccess {S}eries of {I}maging {S}tudies ({OASIS}):
  cross-sectional {MRI} data in young, middle aged, nondemented, and demented
  older adults.
\newblock \emph{Journal of Cognitive Neuroscience}, 19\penalty0 (9):\penalty0
  1498--1507, 2007.

\bibitem[Pennec et~al.(2018)]{pennec2018barycentric}
Xavier Pennec et~al.
\newblock Barycentric subspace analysis on manifolds.
\newblock \emph{The Annals of Statistics}, 46\penalty0 (6A):\penalty0
  2711--2746, 2018.

\bibitem[Sommer et~al.(2010)Sommer, Lauze, Hauberg, and
  Nielsen]{sommer2010manifold}
Stefan Sommer, Fran{\c{c}}ois Lauze, S{\o}ren Hauberg, and Mads Nielsen.
\newblock Manifold valued statistics, exact principal geodesic analysis and the
  effect of linear approximations.
\newblock In \emph{European Conference on Computer Vision}, pages 43--56.
  Springer, 2010.

\bibitem[Townsend et~al.(2016)Townsend, Koep, and
  Weichwald]{townsend2016pymanopt}
James Townsend, Niklas Koep, and Sebastian Weichwald.
\newblock Pymanopt: A python toolbox for optimization on manifolds using
  automatic differentiation.
\newblock \emph{Journal of Machine Learning Research}, 17\penalty0
  (137):\penalty0 1--5, 2016.
\newblock URL \url{http://jmlr.org/papers/v17/16-177.html}.

\bibitem[Vapnik(1995)]{vapnik1995nature}
Vladimir Vapnik.
\newblock \emph{The Nature of Statistical Learning Theory}.
\newblock Springer Science \& Business Media, 1995.

\bibitem[Ye and Lim(2016)]{ye2016schubert}
Ke~Ye and Lek-Heng Lim.
\newblock Schubert varieties and distances between subspaces of different
  dimensions.
\newblock \emph{SIAM Journal on Matrix Analysis and Applications}, 37\penalty0
  (3):\penalty0 1176--1197, 2016.

\bibitem[Zhang and Fletcher(2013)]{zhang2013probabilistic}
Miaomiao Zhang and Tom Fletcher.
\newblock Probabilistic principal geodesic analysis.
\newblock In \emph{Advances in Neural Information Processing Systems}, pages
  1178--1186, 2013.

\end{thebibliography}




\end{document}